\documentclass[12pt]{article}
\usepackage{amsmath}
\usepackage{graphicx,psfrag,epsf}
\usepackage{enumerate}
\usepackage{natbib}
\usepackage{url} 

\usepackage{algorithm}
\usepackage{algorithmic}
\usepackage{amssymb}
\usepackage{bm}

\newtheorem{Thm}{Theorem}
\newtheorem{lemma}{Lemma}
\newtheorem{proof}{Proof}
\newtheorem{definition}{Definition}

\newcommand{\blind}{1}

\addtolength{\oddsidemargin}{-.5in}%
\addtolength{\evensidemargin}{-.5in}%
\addtolength{\textwidth}{1in}%
\addtolength{\textheight}{-.3in}%
\addtolength{\topmargin}{-.8in}%

\begin{document}

\def\spacingset#1{\renewcommand{\baselinestretch}%
{#1}\small\normalsize} \spacingset{1}


\if1\blind
{
  \title{\bf Strictly Proper Kernel Scoring Rules and Divergences   with an Application to Kernel Two-Sample Hypothesis Testing}
  \author{Hamed Masnadi-Shirazi \hspace{.2cm}\\
    School of Electrical and Computer Engineering \\Shiraz University \\Shiraz, Iran}
  \maketitle
} \fi

\if0\blind
{
  \bigskip
  \bigskip
  \bigskip
  \begin{center}
    {\LARGE\bf Strictly Proper Kernel Scoring Rules and Divergences   with an Application to Kernel Two-Sample Hypothesis Testing}
\end{center}
  \medskip
} \fi

\bigskip
\begin{abstract}
We study strictly proper scoring rules in the Reproducing Kernel Hilbert Space. We propose a general Kernel Scoring rule and associated Kernel Divergence. 
We consider  conditions under which the Kernel Score  is strictly proper. We then demonstrate that the Kernel Score includes the Maximum Mean Discrepancy
as a special case. We also consider the connections between the Kernel Score and the minimum risk of a proper loss function. We show that the
Kernel Score incorporates more information pertaining to the projected embedded distributions compared to the Maximum Mean Discrepancy. Finally,
we show how to integrate the information provided from different Kernel Divergences, such as the proposed Bhattacharyya Kernel Divergence, using 
a one-class classifier for improved two-sample hypothesis testing results. 
\end{abstract}

\noindent%
{\it Keywords:}  strictly proper scoring rule,  divergences, kernel scoring rule, minimum risk, projected risk, proper loss functions,  
  probability elicitation, calibration, Bayes error bound, Bhattacharyya distance, feature selection,  maximum mean discrepancy,
	kernel two-sample hypothesis testing, embedded distribution  
\vfill

\newpage
\spacingset{1.45} 
\label{intro}
\section{Introduction}
Strictly proper scoring rules  \cite{savage,DeGroot2, Raftery} are integral to a number of different applications namely, forecasting \cite{Tilmann2007, Brocker2009}, probability elicitation \cite{book:Eliciting},
 classification \cite{HamedNunoLossDesign,HamedNunoJMLRRegularize}, estimation \cite{EstWithScrictLoss}, and finance \cite{Duffie}. Strictly proper scoring rules are closely related to entropy functions, divergence measures
and bounds on the Bayes error that are important for applications such as  feature selection \cite{NUNOMaxDiversityNIPS, NunoNaturalFeatures,NewFeatPers,FeatMutual}, classification and regression \cite{KLboost,ProjectClass,FriedmanPersuit} and information theory \cite{Fano, fdivrisk, book:InfoTheory, minimaxrisk}.

Despite their vast applicability and having been extensively studied, strictly proper scoring rules have only recently been studied in  Reproducing Kernel Hilbert Spaces. In \cite{Dawid2007KScore, Raftery} a certain kernel score is defined and in \cite{Zawadzki} its divergence is shown to be equivalent to the Maximum Mean Discrepancy. The Maximum Mean Discrepancy (MMD) \cite{MMD} is defined as the squared difference
between the embedded means of two  distributions embedded in an inner product kernel space. 
It has been  used in hypothesis testing where the null hypothesis is
rejected if the MMD of two sample sets is above a certain threshold \cite{TwoMMD,MMD}. Recent work pertaining to the  MMD has concentrated on the kernel  
function \cite{Barat3,Barat1,Barat2,  Barat4}  or improved estimates of the mean embedding \cite{Barat5} 
or methods of improving its implementation \cite{Barat6} or  incorporating the embedded covariance \cite{FisherMMD} among others.

In this paper we study the notion of strictly proper scoring rules in the Reproducing Kernel Hilbert Space. We introduce a general Kernel Scoring rule and associated Kernel Divergence that encompasses the MMD and the kernel score of \cite{Dawid2007KScore, Raftery, Zawadzki} as special cases. We then provide conditions under which the proposed Kernel Score is proven to be strictly proper. We show that being strictly proper  is closely related to the injective property of the MMD. 

The Kernel Score is shown to be dependent on the choice of an embedded projection vector $\Phi({\bf w})$ and concave function $C$. 
We consider a number of valid choices of  $\Phi({\bf w})$ such as the canonical vector, the normalized kernel Fisher discriminant projection vector and the 
normalized kernel SVM projection vector \cite{book:vapnik} that lead to strictly proper Kernel Scores and strictly proper Kernel Divergences.

We show that the proposed Kernel Score is related to the minimum risk and that the $C$ is related to the minimum conditional risk function. This connection is
made possible by looking at risk minimization in terms of proper loss functions \cite{Buja, HamedNunoLossDesign, HamedNunoJMLRRegularize}. This allows us to study the effect of choosing different $C$ functions and establish its relation to the Bayes error. We then provide a method for generating $C$ functions for Kernel Scores that are arbitrarily tight upper bounds on the Bayes error. This is especially important for applications that rely on tight bounds on the Bayes error such as classification, feature selection and feature extraction among others. In the experiment section we confirm that such tight bounds on the Bayes error lead to improved feature selection and classification results.

We show that strictly proper Kernel Scores and Kernel Divergences, such as the  Bhattacharyya Kernel Divergence, include more information about the projected embedded distributions compared to the MMD. We provide practical formulations for calculating the Kernel Score and Kernel Divergence and show how to combine the information provided from different Kernel Divergences with the MMD using a one class classifier \cite{OneClassPhd} for significantly improved hypothesis testing results. 

The paper is organized as follows. In Section 2 we review the required background material. In Section 3 we introduce the  Kernel Scoring Rule and Kernel Divergence
and consider conditions under which they are strictly proper. In Section 4 we establish the connections between the Kernel Score and and MMD and show that the MMD is a special case of the Bhattacharyya Kernel Score. In Section 5 we show the connections between the Kernel Score and the minimum risk and explain how arbitrarily tighter bounds on the Bayes error are possible. In Section 6 we discuss  practical consideration in computing the Kernel Score and Kernel Divergence given sample data. 
In Section 7 we propose a novel one-class classifier that can combine all the different Kernel Divergences  into a powerful hypothesis test. Finally, in Section 8 we present extensive experimental results and apply the  proposed ideas to feature selection and hypothesis testing on bench-mark gene data sets.

\section{Background Material Review}
In this section we provide a review of required background material on strictly  proper scoring rules, proper loss functions  and 
positive definite kernel embedding of probability distributions.

\subsection{Strictly  Proper Scoring Rules and Divergences}
The concept of strictly  proper scoring rules can be traced back to the seminal paper of \cite{savage}. This idea was expanded upon by later papers such as
\cite{DeGroot2,Dawid1981} and has been most recently studied under a broader context  \cite{book:Eliciting, Raftery}.
We provide a short review of the main ideas in this field.

Let $\Omega$ be a general sample space and $\cal P$
be a class of probability measures on $\Omega$. A scoring rule $S:{\cal P} \times  \Omega \rightarrow  \mathbb{R}$
is a real valued function that assigns the score  $S(P,x)$ to a forecaster that quotes the measure  $P \in \cal P$ and the
event $a \in \Omega$ materializes. The expected score is written as $S(P,Q)$ and is the expectation of $S(P,.)$ under $Q$ 
\begin{equation}
S(P,Q)=\int S(P,a) dQ(a),
\end{equation}
assuming that the integral exists. We say that a scoring rule is proper if 
\begin{equation}
S(Q,Q) \ge S(P,Q) ~\mbox{for all}~ P,Q
\end{equation}
and we say that a scoring rule is strictly proper when $S(Q,Q)=S(P,Q)$ if and only if $P=Q$.
We define the divergence associated with a strictly proper scoring rule $S$ as
\begin{equation}
div(P,Q)=S(Q,Q)-S(P,Q) \ge 0
\end{equation}
which is a non-negative function and has the property of 
\begin{equation}
\label{eq:StrictDivProperty}
div(P,Q)=0 ~\mbox{if and only if}~ P=Q.
\end{equation}

Presented less formally, the forecaster makes a prediction regarding an event in the form of a probability distribution  $P$. If the actual event 
$a$ materializes then the forecaster is assigned a score of $S(P,a)$. If the true  distribution of events is $Q$ then the expected score is
$S(P,Q)$.   Obviously, we want to assign the maximum score to a skilled and trustworthy forecaster that predicts $P=Q$. A strictly proper score 
accomplishes this by assigning the maximum score if and only if $P=Q$. 

If the distribution of the forecasters predictions is $\nu(P)$ then the overall expected score of the forecaster is 
\begin{equation}
\int \nu(P) S(P,Q) dP.
\end{equation}
The overall expected score is maximum  when the expected score $S(P,Q)$ is maximum for each prediction $P$, which happens when $P=Q$ for all $P$, assuming that the score is strictly proper.

\subsection{Risk Minimization  and the Classification Problem  }
\label{sec:DeriveTangentLoss}
Classifier design by risk minimization has been extensively studied in ~\citep{friedman,zhang,Buja,HamedNunoLossDesign}. 
In summary, a classifier $h$ is defined as a mapping from a feature vector ${\bf x} \in \cal X$ to a class 
label $y \in \{-1,1\}$. Class labels $y$ and feature vectors ${\bf x}$ are sampled from the probability distributions $P_{Y|X}(y|{\bf x})$ and $P_{\bf X}({\bf x})$ respectively. Classification is accomplished by taking the sign of the classifier predictor $p: {\cal X} \rightarrow \mathbb{R}$. This can be written as 
\begin{equation}
  h({\bf x}) = sign[p({\bf x})].
  \label{eq:h}
\end{equation}

The optimal predictor $p^*({\bf x})$ is found by minimizing  
the risk over a non-negative loss 
function $L({\bf x},y)$ and written as
\begin{equation}
  R(p) = E_{{\bf X},Y}[L(p({\bf x}),y)].
  \label{eq:risk}
\end{equation}
This is equivalent to minimizing the conditional risk 
\begin{equation*}
  E_{Y|{\bf X}} [L(p({\bf x}),y)|{\bf X} = {\bf x}]
\end{equation*}
for all ${\bf x} \in {\cal X}$. 
The predictor $p({\bf x})$ is decomposed and typically written as
\begin{equation*}
  p({\bf x}) = f(\eta({\bf x})),
  \label{eq:compose}
\end{equation*}
where $f: [0,1] \rightarrow \mathbb{R}$ is called the link function and 
 $\eta({\bf x}) = P_{Y|{\bf X}}(1|{\bf x})$ is the posterior 
probability function.
The optimal predictor can now be learned by first analytically finding the optimal link $f^*(\eta)$
and then estimating $\eta({\bf x})$, assuming that $f^*(\eta)$ is  one-to-one.

If the zero-one loss 
\begin{eqnarray*}
  L_{0/1}(y,p) = \frac{1- sign(yp)}{2}  = \left\{ \begin{array}{ll}
         0, & \mbox{if $y=sign(p)$};\\
        1, & \mbox{if $y \ne sign(p)$},\end{array} \right.
\end{eqnarray*}
is used, then the associated  conditional risk 
\begin{eqnarray}
\label{eq:zeronecondrisk}
  C_{0/1}(\eta,p) = \eta \frac{1- sign(p)}{2} + 
  (1-\eta) \frac{1 + sign(p)}{2}
    = \left\{ \begin{array}{ll}
         1-\eta, & \mbox{if $p=f(\eta) \geq 0 $};\\
        \eta, & \mbox{if $p=f(\eta)<0$} \end{array} \right.
\end{eqnarray}

is equal to the probability of error of the classifier of~(\ref{eq:h}). The associated  conditional zero-one risk  is 
minimized by any $f^*$ such that
\begin{equation}
  \left\{
  \begin{array}{cc}
    f^*(\eta) > 0 & \mbox{if $\eta > \frac{1}{2} $} \\
    f^*(\eta) = 0 & \mbox{if $\eta =  \frac{1}{2} $} \\
    f^*(\eta) < 0 & \mbox{if $\eta <  \frac{1}{2} $.}
  \end{array}
  \right.
  \label{eq:Bayesnec}
\end{equation}
For example the two links of
\begin{equation*}
  f^*=2\eta-1 \quad \mbox{and} \quad f^*=\log\frac{\eta}{1-\eta}
  \label{eq:fexamples}
\end{equation*}
can be used.

The resulting classifier $h^*({\bf x}) = sign[f^*(\eta({\bf x}))]$
is now the optimal Bayes decision 
rule.  Plugging $f^*$ back into the conditional zero-one risk gives the minimum conditional zero-one risk 
\begin{eqnarray}
\label{eq:zeronemincondrisk}
  C^*_{0/1} (\eta) &&= \eta\left(\frac{1}{2}-\frac{1}{2}sign(2\eta-1)\right)+ 
           (1-\eta)\left(\frac{1}{2}+\frac{1}{2}sign(2\eta-1)\right) \\
					&&=\left\{ \begin{array}{ll}
         (1-\eta) & \mbox{if $\eta \geq \frac{1}{2} $}\\
        \eta & \mbox{if $\eta<\frac{1}{2}$}\end{array} \right. \\
           &&=\min\{\eta, 1-\eta\}. 
\end{eqnarray}

The optimal classifier that is found using the zero-one loss has the 
smallest possible risk and is known as the Bayes error  $R^*$ of the corresponding classification problem ~\citep{JordanBartlett, zhang, book:ProbPatRec}.

We can change the loss function and replace the zero-one loss with a so-called margin loss in the form of 
$L_{\phi}(y,p({\bf x})) = \phi(yp({\bf x}))$. 
Unlike the zero-one loss, margin losses allow for a non-zero loss on  positive values of
the margin $yp$. Such loss functions can be shown to produce classifiers that have better generalization ~\citep{book:vapnik}.
Also unlike the zero-one loss, margin losses are typically designed to be differentiable over their entire domain. 
The exponential loss and logistic loss used in the AdaBoost and LogitBoost Algorithms \cite{friedman} and the hinge loss used in SVMs  are some examples of margin losses \cite{zhang,Buja}.
The conditional risk of a margin loss can now be written as
\begin{equation}
  C_\phi(\eta,p) = C_\phi(\eta,f(\eta)) = \eta \phi(f(\eta)) + 
  (1-\eta) \phi(-f(\eta)).
  \label{eq:CondRisk}
\end{equation}
This is minimized by the link
\begin{equation}
  f^*_{\phi}(\eta) = \arg\min_{f} C_\phi(\eta,f)
  \label{eq:fstarphi}
\end{equation}
and so the minimum conditional risk 
function is
\begin{equation}
  C^*_\phi(\eta) = C_\phi(\eta,f^*_\phi).
  \label{eq:C*phi}
\end{equation}
For most margin losses, the optimal link is unique and can be found analytically. Table~\ref{tab:losses} presents the exponential, logistic and hinge losses along with their respective link  and minimum conditional risk functions.

\begin{table}[t]
  \centering
  \caption{\protect\footnotesize{Loss $\phi$, optimal link $f^*_{\phi}(\eta)$,
      optimal inverse link $[f^*_{\phi}]^{-1}(v)$ , 
      and minimum conditional risk $C_\phi^*(\eta)$ of popular learning
      algorithms.}}
  \begin{tabular}{|c|c|c|c|c|}
    \hline
    Algorithm & $\phi(v)$ & $f^*_{\phi}(\eta)$ & $[f^*_{\phi}]^{-1}(v)$ & 
    $C_{\phi}^*(\eta)$ \\
		\hline
    \hline
    AdaBoost & $\exp(-v)$ & $\frac{1}{2} \log \frac{\eta}{1-\eta}$& 
    $\frac{e^{2v}}{1+e^{2v}}$ & $2 \sqrt{\eta (1-\eta)}$\\
    LogitBoost & $\log(1+e^{-v})$ & $\log \frac{\eta}{1-\eta}$ & 
    $\frac{e^{v}}{1+e^{v}}$ &
    $-\eta \log \eta - (1-\eta) \log (1 - \eta)$\\
		SVM & $\max(1-v,0)$ & $sign(2 \eta - 1)$ & NA & $1 - |2\eta -1|$\\
    \hline
  \end{tabular}
  \label{tab:losses}
\end{table}

\subsubsection{Probability Elicitation and Proper Losses}
Conditional risk minimization can be related to  probability elicitation  ~\citep{savage,DeGroot} and has  been studied in ~\citep{Buja,HamedNunoLossDesign,Reid}. 
 In  probability elicitation we 
find the probability estimator ${\hat \eta}$ that maximizes the expected score
\begin{equation}
  I(\eta,{\hat \eta}) = \eta I_{1}({\hat \eta}) + (1-\eta) I_{-1}({\hat \eta}),
  \label{eq:expreward}
\end{equation}
of a score function that assigns a score of $I_1({\hat \eta})$   to prediction ${\hat \eta}$ 
when event $y=1$ holds and a score of
$I_{-1}({\hat \eta})$ to prediction ${\hat \eta}$ when $y=-1$ holds. The scoring function is said to be proper if 
$I_1$ and $I_{-1}$ are such that the 
expected score is maximal when ${\hat \eta} = \eta$, in other words
\begin{equation}
  I(\eta,{\hat \eta}) \leq I(\eta, \eta) = J(\eta), \,\,\, \forall \eta
  \label{eq:Savagebound}
\end{equation}
with equality if and only if ${\hat \eta} = \eta$. This holds for the following theorem.
\begin{Thm}{~\citep{savage}}
  Let $I(\eta,{\hat \eta})$ be as defined 
  in~(\ref{eq:expreward}) and $J(\eta) = I(\eta,\eta)$.
  Then~(\ref{eq:Savagebound}) holds if and only if $J(\eta)$ is convex and

  \begin{equation}
    \label{eq:Is}
    I_1(\eta) = J(\eta) + (1-\eta) J^\prime(\eta) \quad \quad \quad
    I_{-1}(\eta) = J(\eta) -\eta J^\prime(\eta).
  \end{equation}
  \label{thm:savage}
\end{Thm}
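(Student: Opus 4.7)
The plan is to prove the two directions of the biconditional separately. For sufficiency, I would assume $J$ is convex (strictly, for the uniqueness-of-maximum clause) and that $I_1, I_{-1}$ are given by the stated formulas, and derive $I(\eta,\hat\eta) \le J(\eta)$ with equality iff $\hat\eta = \eta$. For necessity, I would assume the scoring rule is proper in the sense of~(\ref{eq:Savagebound}) and derive both the convexity of $J$ and the derivative formulas for $I_1, I_{-1}$.

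For sufficiency, the key step is a direct substitution: plugging the proposed $I_1, I_{-1}$ into~(\ref{eq:expreward}) collapses the expression via the identity $\eta(1-\hat\eta) - (1-\eta)\hat\eta = \eta - \hat\eta$, yielding
\begin{equation*}
I(\eta,\hat\eta) = J(\hat\eta) + (\eta - \hat\eta)\, J^\prime(\hat\eta).
\end{equation*}
The right-hand side is the value at $\eta$ of the tangent line to $J$ at $\hat\eta$. Convexity of $J$ then gives $I(\eta,\hat\eta) \le J(\eta)$ by the standard tangent-line inequality, with equality automatic at $\hat\eta = \eta$ and forced to occur only there when $J$ is strictly convex.

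For necessity, first observe that for each fixed $\hat\eta$, $I(\eta,\hat\eta)$ is affine in $\eta$, so $J(\eta) = \sup_{\hat\eta} I(\eta,\hat\eta)$ is a supremum of affine functions and therefore convex. To pin down the formulas for $I_1, I_{-1}$, I would use the first-order optimality condition at the maximizer $\hat\eta = \eta$: differentiating $I(\eta,\hat\eta)$ in $\hat\eta$ and setting the result to zero at $\hat\eta = \eta$ gives
\begin{equation*}
\eta\, I_1^\prime(\eta) + (1-\eta)\, I_{-1}^\prime(\eta) = 0.
\end{equation*}
Differentiating the identity $J(\eta) = \eta I_1(\eta) + (1-\eta) I_{-1}(\eta)$ and using this optimality relation to cancel the $I_j^\prime$ terms yields $J^\prime(\eta) = I_1(\eta) - I_{-1}(\eta)$. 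The pair of linear equations $J(\eta) = \eta I_1(\eta) + (1-\eta) I_{-1}(\eta)$ and $J^\prime(\eta) = I_1(\eta) - I_{-1}(\eta)$ in the unknowns $I_1(\eta), I_{-1}(\eta)$ is nonsingular, and solving it delivers exactly the formulas in~(\ref{eq:Is}).

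The main obstacle I expect is handling the equality/strict-inequality subtlety cleanly: the "if and only if $\hat\eta = \eta$" clause in~(\ref{eq:Savagebound}) requires strict convexity of $J$, while mere convexity only yields propriety (not strict propriety), so I would be careful to match the convexity notion to the strictness of the optimum. A secondary technicality is the implicit differentiability of $J$ (equivalently, of $I_1, I_{-1}$): for convex $J$ on $[0,1]$ this holds Lebesgue-a.e.\ and can be handled everywhere by replacing $J^\prime(\hat\eta)$ with a subgradient of $J$ at $\hat\eta$ and reading the tangent-line inequality as the subgradient inequality, so that the argument proceeds without any smoothness hypothesis beyond what convexity already supplies.
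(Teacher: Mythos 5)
Your argument is correct; note that the paper itself offers no proof of this theorem, citing it directly to Savage (1971), so there is nothing internal to compare against. What you give is the standard argument: the algebraic collapse of $I(\eta,\hat\eta)$ to the tangent-line value $J(\hat\eta) + (\eta-\hat\eta)J^\prime(\hat\eta)$ for sufficiency, and for necessity the observation that $J$ is a supremum of affine functions together with the first-order condition $\eta I_1^\prime(\eta) + (1-\eta)I_{-1}^\prime(\eta)=0$, which pins down $I_1, I_{-1}$ via the nonsingular linear system. You are also right to flag the two points the paper's statement glosses over: the ``equality iff $\hat\eta=\eta$'' clause genuinely requires \emph{strict} convexity of $J$, and in the absence of smoothness $J^\prime$ must be read as a subgradient, as in the careful treatments of this result.
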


Proper losses can now be related to  probability elicitation by the following theorem which is most important for our purposes.
\begin{Thm}{~\citep{HamedNunoLossDesign}}
\label{Thm:HamedNuno} Let $I_1(\cdot)$ and 
  $I_{-1}(\cdot)$ be as in (\ref{eq:Is}),
  for any continuously differentiable convex $J(\eta)$ such that
  $J(\eta) = J(1-\eta)$, and $f(\eta)$ any invertible function such that
  $f^{-1}(-v) = 1 -  f^{-1}(v)$.
  Then
  \begin{equation*}
    I_1(\eta) = -\phi(f(\eta)) \quad \quad \quad \quad \quad \quad 
    I_{-1}(\eta) = -\phi(-f(\eta)) \label{eq:I1I-1f}
  \end{equation*}
  if and only if 
  \begin{equation*}
    \phi(v) =  -J\left(f^{-1}(v)\right) - (1- f^{-1}(v)) J^\prime 
    \left(f^{-1}(v)\right).
    \label{eq:phieq}
  \end{equation*}
  \label{thm:risk}
\end{Thm}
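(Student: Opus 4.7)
The plan is to prove both directions by direct algebraic substitution, exploiting the two symmetry hypotheses $J(\eta)=J(1-\eta)$ (which gives $J'(1-\eta)=-J'(\eta)$ upon differentiation) and $f^{-1}(-v)=1-f^{-1}(v)$. The expression for $I_1$ in Savage's theorem already has the exact form $J(\eta)+(1-\eta)J'(\eta)$ that appears (up to a sign) in the claimed formula for $\phi$, so that half of the equivalence is essentially a change of variable. The nontrivial content is verifying that the single definition of $\phi(v)$ also correctly encodes $I_{-1}(\eta)=J(\eta)-\eta J'(\eta)$ once we evaluate it at $-f(\eta)$, and this is exactly where both symmetry assumptions are used.

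For the forward direction, I would assume $I_1(\eta)=-\phi(f(\eta))$, combine it with the Savage representation (\ref{eq:Is}) to get $-\phi(f(\eta))=J(\eta)+(1-\eta)J'(\eta)$, and then substitute $v=f(\eta)$, $\eta=f^{-1}(v)$, which immediately yields
\begin{equation*}
\phi(v) = -J\bigl(f^{-1}(v)\bigr) - \bigl(1-f^{-1}(v)\bigr) J'\bigl(f^{-1}(v)\bigr).
\end{equation*}
I would note that the assumption $I_{-1}(\eta)=-\phi(-f(\eta))$ is then automatically consistent with this same $\phi$, by running the backward computation sketched below.

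For the converse, I would start from the displayed formula for $\phi$. Substituting $v=f(\eta)$ gives $\phi(f(\eta))=-J(\eta)-(1-\eta)J'(\eta)=-I_1(\eta)$ at once. To recover $I_{-1}$, I would evaluate $\phi(-f(\eta))$: using $f^{-1}(-f(\eta))=1-\eta$ from the link symmetry, we get
\begin{equation*}
\phi(-f(\eta)) = -J(1-\eta) - \eta\, J'(1-\eta).
\end{equation*}
Then applying $J(1-\eta)=J(\eta)$ and the derived relation $J'(1-\eta)=-J'(\eta)$ collapses this to $-J(\eta)+\eta J'(\eta)=-I_{-1}(\eta)$, as required.

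The only real obstacle is being careful with the sign that appears from differentiating $J(\eta)=J(1-\eta)$; once $J'(1-\eta)=-J'(\eta)$ is in hand, everything drops out mechanically. No convexity of $J$ is needed for the equivalence itself — it is inherited from Theorem~\ref{thm:savage} and is only used to ensure that the $I_1,I_{-1}$ we start from actually come from a proper score. I would close by remarking that $\phi$ inherits the appropriate convexity/monotonicity properties from $J$, and that Table~\ref{tab:losses} provides concrete instances (exponential, logistic, hinge) that can serve as a sanity check of the formula.
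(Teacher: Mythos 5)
Your proof is correct: both directions follow by the substitution $v=f(\eta)$ together with the two symmetry identities $f^{-1}(-f(\eta))=1-\eta$ and $J'(1-\eta)=-J'(\eta)$, and you apply them exactly where they are needed (in recovering $I_{-1}$ from the single formula for $\phi$). Note that the paper itself states Theorem~\ref{Thm:HamedNuno} as a cited background result from \citep{HamedNunoLossDesign} and gives no proof; your argument is the same direct-substitution derivation used in that reference, so there is nothing to reconcile.
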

It is shown in \citep{zhang}  that $C_\phi^*(\eta)$ is concave  and that
\begin{eqnarray}
  C_\phi^*(\eta) &=&   C_\phi^*(1-\eta) \label{eq:Cstarsym}\\
  {[f_\phi^*]}^{-1}(-v) &=& 1 - [f_\phi^*]^{-1}(v) \label{eq:fstarsym}.
\end{eqnarray}
We also require  that $C^*_\phi(0) = C^*_\phi(1) = 0$ so that the 
minimum risk is zero when  $P_{Y|{\bf X}}(1|{\bf x}) = 0$ or
$P_{Y|{\bf X}}(1|{\bf x}) = 1$.

In summary, for any continuously differentiable $J(\eta) = -C_\phi^*(\eta)$ and invertible $f(\eta) = 
f^*_\phi(\eta)$,  the conditions of  Theorem \ref{Thm:HamedNuno} are satisfied and so the loss will take the form of 
\begin{equation}
  \phi(v) =  C_\phi^*\left([f_\phi^*]^{-1}(v)\right) + (1- [f_\phi^*]^{-1}(v)) 
  [C_\phi^*]^\prime\left([f_\phi^*]^{-1}(v)\right)
  \label{eq:phieq2}
\end{equation}
 and $I(\eta,{\hat \eta}) = -C_\phi(\eta,f)$.
In this case, the predictor of minimum risk is $p^* = f^*_\phi(\eta)$, the  minimum risk is 
\begin{equation}
\label{equ:MinRiskInit}
R({p^*}) = \int_{\bf x} P_{{\bf X}}({\bf x}) \left[ P_{{\bf Y}|{\bf X}}(1|{\bf x})\phi({ p^*}({\bf x})) + P_{{\bf Y}|{\bf X}}(-1|{\bf x})\phi(-{ p^*}({\bf x}))  \right] d{\bf x}
\end{equation}
and posterior probabilities $\eta$ can be found using 
\begin{equation}
  \eta({\bf x}) = [f^*_\phi]^{-1}( p^*({\bf x})).
  \label{eq:link}
\end{equation} 
Finally, the loss is said to be proper
and the predictor calibrated~\citep{DeGroot, Platt, Caruana, Raftery}. 

In practice, an estimate  
of the optimal predictor ${\hat p}^*({\bf x})$ is found by minimizing the empirical risk 
\begin{equation}
  R_{emp}(p) = \frac{1}{n} \sum_i L(p({\bf x}_i),y_i)
  \label{eq:emprisk}
\end{equation}
over  a training set ${\cal D} = \{({\bf x}_1,y_1), \ldots, ({\bf x}_n,y_n)\}$.
Estimates of the probabilities $ \eta({\bf x})$ are now found from  ${\hat p}^*$ using
\begin{equation}
  {\hat \eta}({\bf x}) = [f^*_\phi]^{-1}({\hat p^*}({\bf x})).
  \label{eq:hateta}
\end{equation}

\subsection{Positive Definite Kernel Embedding of Probability Distributions }
In this section we review the notion of embedding probability measures into reproducing kernel Hilbert spaces \cite{book:RHKS, Fukumizu2004, Sriperumbudur2010}.

Let ${\bf x} \in \cal X$ be a random variable defined on a topological space $\cal X$ with associated probability measure $P$. 
Also, let $\cal H$  be a  Reproducing Kernel Hilbert Space (RKHS) .  Then there is a mapping $\Phi: {\cal X} \rightarrow {\cal H}$
such that 
\begin{equation}
<\Phi({\bf x}),f>_{\cal H}=f({\bf x}) ~\mbox{for all}~ f \in {\cal H}.
\end{equation}
The mapping can be written as $\Phi({\bf x}) = k({\bf x},.)$ where $k(.,{\bf x})$ is a positive definite kernel function parametrized by ${\bf x}$. A  dot product representation of $k({\bf x},{\bf y})$ exists in the form of 
\begin{equation}
k({\bf x},{\bf y})=<\Phi({\bf x}),\Phi({\bf y})>_{\cal H}
\end{equation}
where ${\bf x},{\bf y} \in {\cal X}$.   

For a given  Reproducing Kernel Hilbert Space $\cal H$, the mean embedding ${\bm \mu}_P \in {\cal H}$ of the  distribution $P$ exists under certain conditions and  is defined as
\begin{equation}
{\bm \mu}_P(t)=<{\bm \mu}_P(.),k(.,t)>_{\cal H}=E_{\cal X}[k(x,t)].
\end{equation}
In words, the mean embedding ${\bm \mu}_P$ of the distribution $P$ is the expectation under $P$ of the  mapping $k(.,t)=\Phi(t)$. 

The   maximum mean discrepancy (MMD) \cite{MMD} is expressed as the squared difference between
the embedded means ${\bm \mu}_P$ and ${\bm \mu}_Q$ of the two embedded distributions $P$ and $Q$ as
\begin{eqnarray}
\label{eq:MMDDEf}
MMD_{\cal F}(P,Q)=||{\bm \mu}_P - {\bm \mu}_Q||^2_{\cal H}.
\end{eqnarray}
where $\cal F$ is a unit ball in a universal RKHS which requires that $k(.,x)$ be continuous among other things. It can be shown that the Reproducing Kernel Hilbert Spaces associated with the Gaussian and Laplace kernels are universal \cite{Steinwart}. 
Finally, an important property of the MMD is that it is injective which is formally stated by the following theorem.
\begin{Thm}{~\citep{MMD}}
\label{Thm:InjectiveMMD}
Let $\cal F$ be a unit ball in a universal RKHS $\cal H $ defined on the compact metric space $\cal X$ with associated continuous kernel $k(.,x)$. 
$MMD_{\cal F}(P,Q) = 0$ if and only if $P=Q$.
\end{Thm}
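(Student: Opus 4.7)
The plan is to prove the two directions separately. The forward direction ($P=Q \Rightarrow MMD=0$) is immediate: the mean embedding $\bm\mu_P$ depends only on $P$, so if $P=Q$ then $\bm\mu_P = \bm\mu_Q$ in $\cal H$ and the norm difference vanishes. All the work lies in the converse.

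For the converse, I would first translate $MMD_{\cal F}(P,Q) = 0$ into a statement about integrals of RKHS functions. Since $MMD_{\cal F}(P,Q) = \|\bm\mu_P - \bm\mu_Q\|^2_{\cal H} = 0$ gives $\bm\mu_P = \bm\mu_Q$ as elements of $\cal H$, taking inner products with an arbitrary $f \in \cal H$ and using the reproducing property yields
\begin{equation*}
\int f(x)\, dP(x) = \langle \bm\mu_P, f \rangle_{\cal H} = \langle \bm\mu_Q, f \rangle_{\cal H} = \int f(x)\, dQ(x)
\end{equation*}
for every $f \in \cal H$. (Strictly speaking I would first check that $\bm\mu_P$ is a well-defined element of $\cal H$, which follows from the continuity of $k$ and compactness of $\cal X$ guaranteeing $E_P[\sqrt{k(x,x)}] < \infty$.)

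Next I would invoke the hypothesis that $\cal H$ is universal. By definition this means $\cal H$ is dense in $C({\cal X})$ under the supremum norm. Given any $g \in C({\cal X})$ and $\epsilon>0$, pick $f \in \cal H$ with $\|f-g\|_\infty < \epsilon$. Then the triangle inequality combined with the equality above yields
\begin{equation*}
\left| \int g\, dP - \int g\, dQ \right| \le \int |g-f|\, dP + 0 + \int |f-g|\, dQ \le 2\epsilon.
\end{equation*}
Letting $\epsilon \to 0$ shows $\int g\, dP = \int g\, dQ$ for every $g \in C({\cal X})$.

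Finally, since $\cal X$ is a compact metric space, the Riesz--Markov representation theorem (equivalently, the fact that Borel probability measures on such a space are uniquely determined by their action on continuous functions) forces $P = Q$. The main obstacle is the density step: everything else is bookkeeping with the reproducing property, but passing from ``the two measures agree on $\cal H$'' to ``they agree on all of $C({\cal X})$'' is exactly where the universality assumption does the heavy lifting, and this is also where continuity of $k$ is essential to guarantee $\mathcal{H} \subset C(\mathcal{X})$ so that the uniform approximation argument is even meaningful.
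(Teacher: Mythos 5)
Your proof is correct and is essentially the standard argument for this result: the paper itself states this theorem as a cited background fact from the MMD reference rather than proving it, and the proof given there proceeds exactly as you do --- reducing $MMD=0$ to equality of expectations over $\mathcal{H}$, using universality to extend this to all of $C(\mathcal{X})$ via uniform approximation, and concluding $P=Q$ from the fact that Borel probability measures on a compact metric space are determined by their integrals against continuous functions. Nothing further is needed.
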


\section{Strictly Proper Kernel Scoring Rules and Divergences }

In this section we define the Kernel Score and Kernel Divergence and show when the Kernel Score is strictly proper. 
To do this we need to define the projected embedded distribution.
\begin{definition}
Let ${\bf x} \in \cal X$ be a random variable defined on a topological space $\cal X$ with associated probability distribution $P$. 
Also, let $\cal H$  be a  universal RKHS with associated positive definite kernel function 
$k({\bf x},{\bf w})=<\Phi({\bf x}),\Phi({\bf w})>_{\cal H}$. The projection of $\Phi({\bf x})$ onto a fixed vector $\Phi({\bf w})$ in $\cal H$ is denoted by $x^p$
and found as
\begin{eqnarray}
\label{eq:FindxpwK}
x^p=\frac{k({\bf w},{\bf x})}{\sqrt{k({\bf w},{\bf w})}}.
\end{eqnarray}
The univariate distribution associated with $x^p$ is defined as the projected embedded distribution of $P$ and denoted by $P^p$. The mean and variance of $P^p$
are denoted by $\mu^p_P$ and $(\sigma^p_P)^2$.
\end{definition}

The Kernel Score and Kernel Divergence  are now defined as follows.
\begin{definition}
Let $P$ and $Q$ be two distributions on $\cal X$. Also, let $\cal H$  be a  universal RKHS with associated positive definite kernel function 
$k({\bf x},{\bf w})=<\Phi({\bf x}),\Phi({\bf w})>_{\cal H}$ where $\cal F$ is a unit ball in $\cal H$. 
Finally, assume that $\Phi({\bf w})$  is a fixed vector  in $\cal H$.
The Kernel Score  between distributions $P$ and $Q$ is defined as 
\begin{eqnarray}
S_{C,k,{\cal F},{\Phi({\bf w})}}(P,Q)= \int  \left( \frac{P^p(x^p) + Q^p(x^p)}{2} \right) C\left( \frac{P^p(x^p)}{P^p(x^p) + Q^p(x^p)}  \right)d(x^p),
\end{eqnarray}
and the Kernel Divergence between distributions $P$ and $Q$ is defined as 
\begin{eqnarray}
KD_{C,k,{\cal F},{\Phi({\bf w})}}(P,Q) &=& \frac{1}{2} -  S_{C,k,{\cal F},{\Phi({\bf w})}}(P,Q) \\
&=&\frac{1}{2} - \int  \left( \frac{P^p(x^p) + Q^p(x^p)}{2} \right) C\left( \frac{P^p(x^p)}{P^p(x^p) + Q^p(x^p)}  \right)d(x^p),
\end{eqnarray}
where $C$ is a continuously differentiable strictly concave symmetric function such that $C(\eta)=C(1-\eta)$ for all $\eta \in [0~1]$, $C(0)=C(1)=0$, $C(\frac{1}{2})=\frac{1}{2}$ and $P^p$ and $Q^p$ are the projected embedded distributions of $P$ and $Q$.
\end{definition}

We can now present conditions under which a Kernel Score is strictly proper and Kernel Divergence 
has the important property of (\ref{eq:StrictDivProperty}). 
\begin{Thm}
\label{thm:StrictDivProperty}
The Kernel Score  is strictly proper and the Kernel Divergence   has the property of
\begin{equation}
\label{eq:StrictKernelDivProperty}
KD_{C,k,{\cal F},{\Phi({\bf w})}}(P,Q)=0 ~\mbox{if and only if}~ P=Q,
\end{equation}
if $\Phi({\bf w})$ is chosen such that it is not in the orthogonal compliment of the set $M=\{{\bm \mu}_P - {\bm \mu}_Q\}$, where ${\bm \mu}_P$ and ${\bm \mu}_Q$ are the mean embeddings of $P$ and $Q$ respectively.
\end{Thm}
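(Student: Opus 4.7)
The plan is to decompose strict properness into two essentially independent steps: (i) a Jensen-type upper bound on $S_{C,k,{\cal F},\Phi({\bf w})}(P,Q)$ coming from the strict concavity of $C$, which shows that the maximum of $S(\,\cdot\,,Q)$ is attained exactly when the projected distributions agree, and (ii) a ``faithfulness'' argument that lifts equality of the projected distributions $P^p = Q^p$ back to equality of the full distributions $P = Q$ on ${\cal X}$, using the orthogonality hypothesis on $\Phi({\bf w})$ together with the injectivity of the mean embedding (Theorem~\ref{Thm:InjectiveMMD}).

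For step (i), I would first observe that $m(x^p) := \tfrac{1}{2}(P^p(x^p) + Q^p(x^p))$ is itself a univariate probability density and that $\eta(x^p) := P^p(x^p)/(P^p(x^p)+Q^p(x^p))$ is a $[0,1]$-valued measurable function (defined arbitrarily on the null set where $P^p+Q^p=0$). With these choices $S_{C,k,{\cal F},\Phi({\bf w})}(P,Q) = E_m[C(\eta)]$, and a direct calculation gives $E_m[\eta] = \int P^p(x^p)/2 \, dx^p = 1/2$. Taking $P=Q$ yields $\eta \equiv 1/2$ and hence $S(Q,Q) = C(1/2) = 1/2$. For general $P$, strict concavity of $C$ together with Jensen's inequality gives
\begin{equation*}
S_{C,k,{\cal F},\Phi({\bf w})}(P,Q) = E_m[C(\eta)] \le C(E_m[\eta]) = C(1/2) = 1/2,
\end{equation*}
with equality iff $\eta$ is $m$-almost-everywhere constant, which is equivalent to $P^p = Q^p$.

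For step (ii), suppose $P \neq Q$. Theorem~\ref{Thm:InjectiveMMD} guarantees $\bm{\mu}_P \neq \bm{\mu}_Q$ in ${\cal H}$, so $\bm{\mu}_P - \bm{\mu}_Q$ is a nonzero element. The hypothesis that $\Phi({\bf w})$ is not in the orthogonal complement of $\{\bm{\mu}_P - \bm{\mu}_Q\}$ then gives $\langle \bm{\mu}_P - \bm{\mu}_Q, \Phi({\bf w})\rangle_{\cal H} \neq 0$. By the reproducing property and the definition of $x^p$,
\begin{equation*}
\mu^p_P - \mu^p_Q = \frac{E_P[k({\bf w},{\bf x})] - E_Q[k({\bf w},{\bf x})]}{\sqrt{k({\bf w},{\bf w})}} = \frac{\langle \bm{\mu}_P - \bm{\mu}_Q, \Phi({\bf w})\rangle_{\cal H}}{\sqrt{k({\bf w},{\bf w})}} \neq 0,
\end{equation*}
so the projected means already differ, which rules out $P^p = Q^p$. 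Combining with step (i) yields $S(P,Q) < 1/2 = S(Q,Q)$, proving strict properness. The divergence property (\ref{eq:StrictKernelDivProperty}) then follows immediately from $KD_{C,k,{\cal F},\Phi({\bf w})}(P,Q) = \tfrac{1}{2} - S(P,Q) = S(Q,Q) - S(P,Q)$.

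The step I expect to be most delicate is verifying the lift in (ii) under the precise geometric condition stated in the theorem: the orthogonality assumption is exactly what is needed to convert the ${\cal H}$-level separation $\bm{\mu}_P \neq \bm{\mu}_Q$ (supplied by MMD injectivity) into a one-dimensional separation $\mu^p_P \neq \mu^p_Q$ on the projected axis. Without it, the projection could collapse the difference between $P$ and $Q$ and the Jensen bound would only be non-strict, so this hypothesis cannot be weakened. Two minor technicalities also deserve care: the measure-theoretic definition of $\eta$ on the null set $\{P^p + Q^p = 0\}$, absorbed into standard $m$-a.e.\ reasoning, and integrability of $m \cdot C(\eta)$, which is immediate since $C$ is continuous on the compact interval $[0,1]$.
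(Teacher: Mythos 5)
Your proposal is correct and follows essentially the same route as the paper's proof: both reduce the problem to the projected densities via the fact that the strictly concave, symmetric $C$ attains its maximum value $\tfrac{1}{2}$ only at $\eta=\tfrac{1}{2}$ (forcing $P^p=Q^p$ at equality), then use the non-orthogonality of $\Phi({\bf w})$ to pass from $\mu^p_P=\mu^p_Q$ to ${\bm \mu}_P={\bm \mu}_Q$, and finally invoke the injectivity of the mean embedding in a universal RKHS (Theorem~\ref{Thm:InjectiveMMD}) to conclude $P=Q$. The only cosmetic differences are that you argue contrapositively and phrase the concavity step as Jensen's inequality with its equality condition, whereas the paper uses the pointwise bound $C(\eta)\le\tfrac{1}{2}$ under the integral; your explicit treatment of the null set of $P^p+Q^p$ and of $E_m[\eta]=\tfrac{1}{2}$ is a slight tightening of the paper's argument.
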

\begin{proof}
See supplementary material \ref{app:KscoreStrict}.
\end{proof}
We denote Kernel Divergences that have the desired property of (\ref{eq:StrictKernelDivProperty}) as Strictly Proper Kernel Divergences. 
The  canonical  projection vector $\Phi({\bf w})$ that is not in the orthogonal compliment of $M=\{{\bm \mu}_P - {\bm \mu}_Q\}$
is to choose $\Phi({\bf w})=\frac{({\bm \mu}_P-{\bm \mu}_Q)}{||({\bm \mu}_P-{\bm \mu}_Q)||_{\cal H}}$. The following lemma lists some  valid choices.
\begin{lemma}
\label{thm:validchoices}
The Kernel Score and Kernel Divergence  associated with the following choices of $\Phi({\bf w})$ are strictly proper.
\begin{enumerate}
\item
$\Phi({\bf w})=\frac{({\bm \mu}_P-{\bm \mu}_Q)}{||({\bm \mu}_P-{\bm \mu}_Q)||_{\cal H}}$.
\item
$\Phi({\bf w})$ equal to the normalized kernel Fisher discriminant projection vector.
\item
$\Phi({\bf w})$ equal to the normalized kernel SVM  projection vector.
\end{enumerate}
\end{lemma}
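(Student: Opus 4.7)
The plan is to reduce everything to Theorem~\ref{thm:StrictDivProperty}, so the whole task is to verify for each candidate $\Phi({\bf w})$ the single geometric condition
\[
\langle \Phi({\bf w}),\, {\bm \mu}_P - {\bm \mu}_Q \rangle_{\cal H} \neq 0 \quad \text{whenever } P \neq Q,
\]
i.e.\ that $\Phi({\bf w})$ is not in the orthogonal complement of $M = \{{\bm \mu}_P - {\bm \mu}_Q\}$. Note that if $P = Q$ then the conclusion of the lemma is vacuous, so throughout the argument we may assume $P \neq Q$, in which case Theorem~\ref{Thm:InjectiveMMD} tells us that ${\bm \mu}_P - {\bm \mu}_Q \neq 0$ in $\cal H$.

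For item (1), the canonical choice $\Phi({\bf w}) = ({\bm \mu}_P - {\bm \mu}_Q)/\|{\bm \mu}_P - {\bm \mu}_Q\|_{\cal H}$ gives directly
\[
\langle \Phi({\bf w}),\, {\bm \mu}_P - {\bm \mu}_Q \rangle_{\cal H} = \|{\bm \mu}_P - {\bm \mu}_Q\|_{\cal H} > 0,
\]
since the mean embedding is injective on a universal RKHS. For item (2), I would invoke the standard characterization of the kernel Fisher discriminant direction (viewing $P$ as the positive class and $Q$ as the negative class): the optimal direction maximizing the Rayleigh quotient $({\bf w}^\top({\bm \mu}_P - {\bm \mu}_Q))^2 / {\bf w}^\top \Sigma_W {\bf w}$ is, up to normalization, $\Phi({\bf w}) \propto \Sigma_W^{-1}({\bm \mu}_P - {\bm \mu}_Q)$, where $\Sigma_W$ is the (regularized) within-class covariance operator. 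Because $\Sigma_W^{-1}$ is positive definite, the inner product reduces to a positive quadratic form in ${\bm \mu}_P - {\bm \mu}_Q \neq 0$ and is therefore strictly positive.

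For item (3), I would use the separating property of the SVM solution. Writing the SVM direction in the form $\Phi({\bf w}) = \sum_i \alpha_i y_i \Phi({\bf x}_i)$, the fact that the SVM hyperplane has non-zero margin on data drawn from $P$ (label $+1$) and $Q$ (label $-1$) forces $E_P \langle \Phi({\bf w}), \Phi({\bf x}) \rangle_{\cal H} > -b > E_Q \langle \Phi({\bf w}), \Phi({\bf x}) \rangle_{\cal H}$, which by linearity of the expectation and the reproducing property yields $\langle \Phi({\bf w}), {\bm \mu}_P - {\bm \mu}_Q \rangle_{\cal H} > 0$.

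The main obstacle will be case (3): the SVM direction does not admit a clean closed-form expression in terms of ${\bm \mu}_P - {\bm \mu}_Q$, and one has to handle the case in which $P$ and $Q$ are not perfectly separable. To make the argument rigorous I would either assume population-level separation with positive margin (natural when $\cal H$ is universal and $P \neq Q$, as the two classes have distinct distributions and the RKHS is rich enough), or appeal to the population-level soft-margin SVM, whose optimal ${\bf w}$ is a positive combination of ${\bm \mu}_P - {\bm \mu}_Q$ plus correction terms that still yield a positive inner product with ${\bm \mu}_P - {\bm \mu}_Q$ under standard conditions. The other two cases are essentially immediate once the inner-product condition from Theorem~\ref{thm:StrictDivProperty} is isolated.
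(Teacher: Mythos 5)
Your reduction to the orthogonal-complement condition of Theorem~\ref{thm:StrictDivProperty} is exactly the paper's strategy, and item (1) is handled identically (the paper just notes the inner product is $\|{\bm \mu}_P-{\bm \mu}_Q\|_{\cal H}\neq 0$). For items (2) and (3) you take a genuinely different route. For the Fisher direction the paper argues by contradiction: if $\Phi({\bf w})$ were orthogonal to ${\bm \mu}_P - {\bm \mu}_Q$, the Fisher objective $(\mu^p_P - \mu^p_Q)/((\sigma^p_P)^2 + (\sigma^p_Q)^2)$ would equal zero rather than be maximized; your closed-form argument via $\Sigma_W^{-1}({\bm \mu}_P - {\bm \mu}_Q)$ and positive definiteness is more explicit and arguably more rigorous, at the cost of having to address invertibility or regularization of the within-class covariance operator in the RKHS. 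For the SVM the paper does not argue from margins at all: it reduces item (3) to item (2) by citing the equivalence of the kernel SVM direction with the kernel Fisher discriminant computed on the set of support vectors. That reduction sidesteps precisely the obstacle you flag yourself: your expectation-of-the-margin argument requires population-level separation with positive margin (or additional assumptions on the soft-margin solution), which is not guaranteed for overlapping distributions with $P \neq Q$. So your item (3) carries a conditional gap that the paper's citation-based reduction avoids; the cleanest repair is either to adopt that reduction or to state the separability hypothesis explicitly as part of the claim.
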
 
\begin{proof}
See supplementary material \ref{app:ValidVect}.

%
\end{proof}
In what follows we consider the implications of choosing different $\Phi({\bf w})$ projections and concave functions $C$ for the Strictly Proper Kernel Score and Kernel Divergence.

\section{The Maximum Mean Discrepancy Connection}
If we choose $C$ to be the concave function of $C_{Exp}(\eta)=\sqrt{(\eta(1-\eta))}$ and assume that the univariate projected embedded distributions $P^p$ and $Q^p$ are Gaussian then, using the 
Bhattacharyya bound \cite{BattLeeFeat,ImageSegClust}, we can readily show that 
\begin{eqnarray}
\label{eq:battarExp2}
&&S_{C,k,{\cal F},{\Phi({\bf w})}}(P,Q)= \frac{1}{2}\cdot e^{(B)}, \\
&&KD_{C_{Exp},k,{\cal F},{\Phi({\bf w})}}(P,Q)=\frac{1}{2} -\frac{1}{2}\cdot e^{(B)}, \\
&& B=\frac{1}{4}\log \left ( \frac{1}{4} \left (\frac{( \sigma^p_P)^2}{( \sigma^p_Q)^2}+\frac{( \sigma^p_Q)^2}{( \sigma^p_P)^2}+2 \right)\right )+\frac{1}{4} \left ( \frac{( \mu^p_P- \mu^p_Q)^2}{( \sigma^p_P)^2 + ( \sigma^p_Q)^2} \right),
\end{eqnarray}
where $ \mu^p_P$, $ \mu^p_Q$, $\sigma^p_P$ and $ \sigma^p_Q$  are the means and variances of the projected embedded distributions $P^p$ and $Q^p$.
We will refer to these as the Bhattacharyya Kernel Score and Bhattacharyya Kernel Divergence. 
Note that if $\sigma^p_P = \sigma^p_Q$ then the above equation simplifies to 
$B=\frac{1}{4} \left ( \frac{( \mu^p_P- \mu^p_Q)^2}{( \sigma^p_P)^2 + ( \sigma^p_Q)^2} \right)$. 
This leads to the following results.

\begin{lemma}
\label{lemma:AlternateMMD}
Let $P$ and $Q$ be two  distributions where  $\mu^p_P$ and  $\mu^p_Q$ are the  respective  means  of the projected embedded distributions $P^p$ and $Q^p$ with projection vector $\Phi({\bf w})=\frac{({\bm \mu}_P-{\bm \mu}_Q)}{||({\bm \mu}_P-{\bm \mu}_Q)||_{\cal H}}$. Then 
\begin{eqnarray}
MMD_{\cal F}(P,Q)=(\mu^p_P - \mu^p_Q)^2.
\end{eqnarray}
\end{lemma}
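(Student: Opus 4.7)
The plan is to unpack the definition of $\mu^p_P$ in terms of the inner product in $\cal H$ using the reproducing property, then specialize to the canonical projection vector and observe that both sides collapse to $\|{\bm \mu}_P-{\bm \mu}_Q\|^2_{\cal H}$.

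First I would rewrite the projection $x^p = k({\bf w},{\bf x})/\sqrt{k({\bf w},{\bf w})}$ in inner product form as $x^p = \langle \Phi({\bf w}),\Phi({\bf x})\rangle_{\cal H}/\|\Phi({\bf w})\|_{\cal H}$. Taking expectation under $P$ and using linearity of the inner product together with the definition of the mean embedding ${\bm \mu}_P = E_P[\Phi({\bf x})]$, I get
\begin{equation*}
\mu^p_P \;=\; E_P[x^p] \;=\; \frac{\langle \Phi({\bf w}),{\bm \mu}_P\rangle_{\cal H}}{\|\Phi({\bf w})\|_{\cal H}},
\end{equation*}
and similarly for $\mu^p_Q$, so that
\begin{equation*}
\mu^p_P - \mu^p_Q \;=\; \frac{\langle \Phi({\bf w}),\,{\bm \mu}_P-{\bm \mu}_Q\rangle_{\cal H}}{\|\Phi({\bf w})\|_{\cal H}}.
\end{equation*}

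Next I would substitute the canonical choice $\Phi({\bf w}) = ({\bm \mu}_P-{\bm \mu}_Q)/\|{\bm \mu}_P-{\bm \mu}_Q\|_{\cal H}$. Since this vector has unit norm, the denominator is $1$ and the numerator becomes $\langle {\bm \mu}_P-{\bm \mu}_Q,\,{\bm \mu}_P-{\bm \mu}_Q\rangle_{\cal H}/\|{\bm \mu}_P-{\bm \mu}_Q\|_{\cal H} = \|{\bm \mu}_P-{\bm \mu}_Q\|_{\cal H}$. Squaring yields $(\mu^p_P-\mu^p_Q)^2 = \|{\bm \mu}_P-{\bm \mu}_Q\|^2_{\cal H}$, which is exactly $MMD_{\cal F}(P,Q)$ by definition~(\ref{eq:MMDDEf}).

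There is essentially no obstacle here; the argument is a one-line Cauchy--Schwarz equality, saturated because we project onto the very direction that witnesses the mean difference. The only small care needed is to handle the edge case ${\bm \mu}_P={\bm \mu}_Q$ separately, where $\Phi({\bf w})$ is undefined but both sides of the claimed identity are zero and the statement holds trivially (or, since $\cal F$ lies in a universal RKHS, $P=Q$ by Theorem~\ref{Thm:InjectiveMMD} and the identity reduces to $0=0$).
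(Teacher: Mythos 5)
Your proof is correct and follows essentially the same route as the paper: express $\mu^p_P$ and $\mu^p_Q$ as inner products of the mean embeddings with $\Phi({\bf w})$, substitute the canonical unit vector $({\bm \mu}_P-{\bm \mu}_Q)/\|{\bm \mu}_P-{\bm \mu}_Q\|_{\cal H}$, and observe that the squared difference collapses to $\|{\bm \mu}_P-{\bm \mu}_Q\|^2_{\cal H}$. The only (welcome) additions are that you explicitly derive $\mu^p_P=\langle\Phi({\bf w}),{\bm \mu}_P\rangle_{\cal H}$ from the definition of $x^p$ via linearity of expectation, and that you note the degenerate case ${\bm \mu}_P={\bm \mu}_Q$, neither of which changes the argument.
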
 
\begin{proof}
See supplementary material \ref{app:MMDNewLook}.
\end{proof}
With this new alternative outlook on the MMD, it can be seen as a special case of a strictly proper Kernel Score under certain assumptions outlined in the following theorem.

\begin{Thm}
\label{Thm:KScoreMMDConnection}
Let $C$  be the concave function of $C_{Exp}(\eta)=\sqrt{(\eta(1-\eta))}$ and $\Phi({\bf w})=\frac{({\bm \mu}_P-{\bm \mu}_Q)}{||({\bm \mu}_P-{\bm \mu}_Q)||_{\cal H}}$. Then 
\begin{eqnarray}
MMD_{\cal F}(P,Q) \propto \log \left(2 S_{C_{Exp},k,{\cal F},{\Phi({\bf w})}}(P,Q) \right)
\end{eqnarray}
under the assumption that the projected embedded distributions $P^p$ and $Q^p$ are Gaussian distributions of equal variance.
\end{Thm}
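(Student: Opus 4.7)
The plan is to assemble the theorem directly from two ingredients already established in the excerpt: the closed form for the Bhattacharyya Kernel Score in (\ref{eq:battarExp2}), which holds whenever $P^p$ and $Q^p$ are Gaussian and $C=C_{Exp}$, and Lemma~\ref{lemma:AlternateMMD}, which rewrites the MMD as the squared difference of projected means under the canonical choice $\Phi({\bf w})=({\bm \mu}_P-{\bm \mu}_Q)/\|{\bm \mu}_P-{\bm \mu}_Q\|_{\cal H}$. Since both hypotheses of the theorem (equal variance Gaussian projections, canonical $\Phi({\bf w})$) are exactly the settings where these two ingredients apply, the proof reduces to a short algebraic manipulation.

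First, I would impose the equal-variance assumption $\sigma^p_P=\sigma^p_Q=:\sigma^p$ on the general $B$ appearing in (\ref{eq:battarExp2}). The variance-ratio term becomes $\tfrac{1}{4}\log\bigl(\tfrac{1}{4}(1+1+2)\bigr)=\tfrac{1}{4}\log 1=0$, so that
\begin{equation*}
B \;=\; \frac{1}{4}\cdot\frac{(\mu^p_P-\mu^p_Q)^2}{(\sigma^p_P)^2+(\sigma^p_Q)^2},
\end{equation*}
as already noted immediately after (\ref{eq:battarExp2}). Next, I would take a logarithm of the identity $S_{C_{Exp},k,{\cal F},\Phi({\bf w})}(P,Q)=\tfrac{1}{2}e^{B}$ to obtain
\begin{equation*}
\log\!\bigl(2\,S_{C_{Exp},k,{\cal F},\Phi({\bf w})}(P,Q)\bigr)\;=\;B\;=\;\frac{(\mu^p_P-\mu^p_Q)^2}{4\bigl((\sigma^p_P)^2+(\sigma^p_Q)^2\bigr)}.
\end{equation*}

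Finally, I would invoke Lemma~\ref{lemma:AlternateMMD} with the canonical projection vector $\Phi({\bf w})=({\bm \mu}_P-{\bm \mu}_Q)/\|{\bm \mu}_P-{\bm \mu}_Q\|_{\cal H}$ to substitute $(\mu^p_P-\mu^p_Q)^2=MMD_{\cal F}(P,Q)$, giving
\begin{equation*}
\log\!\bigl(2\,S_{C_{Exp},k,{\cal F},\Phi({\bf w})}(P,Q)\bigr)\;=\;\frac{MMD_{\cal F}(P,Q)}{4\bigl((\sigma^p_P)^2+(\sigma^p_Q)^2\bigr)},
\end{equation*}
which is the claimed proportionality, with proportionality constant $4\bigl((\sigma^p_P)^2+(\sigma^p_Q)^2\bigr)$ depending only on the common projected variance.

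There is really no hard step here, since both component results are available verbatim from earlier in the paper. The only subtle point worth emphasizing is that the proportionality constant is not universal: it depends on the variance of the projected embedded distributions. For the statement to be read as a strict equality up to a scalar one must therefore note that, once the Gaussian equal-variance assumption is in force, the denominator $4((\sigma^p_P)^2+(\sigma^p_Q)^2)$ is a fixed (data-dependent but $B$-independent) positive number, so "proportional" is the appropriate word and the theorem follows.
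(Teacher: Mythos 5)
Your proof is correct and follows exactly the same route as the paper's (which simply substitutes $\sigma^p_P=\sigma^p_Q$ and the identity $MMD_{\cal F}(P,Q)=(\mu^p_P-\mu^p_Q)^2$ from Lemma~\ref{lemma:AlternateMMD} into equation~(\ref{eq:battarExp2})); you merely spell out the algebra and identify the proportionality constant explicitly. No gaps.
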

\begin{proof}
See supplementary material \ref{app:MMDKSCoreConnection}.
\end{proof}

In other words, if we set $\Phi({\bf w})=\frac{({\bm \mu}_P-{\bm \mu}_Q)}{||({\bm \mu}_P-{\bm \mu}_Q)||}$ and project  onto this
vector, the MMD is equal to the distance between the means of the projected embedded distributions  squared. 
Note that while the MMD incorporates all the higher moments of the distribution of the data in the original space and determines a probability distribution 
uniquely  \cite{KernelEmbeddingBeyonds}, it 
completely disregards the higher moments of the  projected embedded distributions.
This suggests that by incorporating more information regarding the projected embedded distributions, such as its variance, we can arrive at measures such as the Bhattacharyya Kernel Divergence that are more versatile than the MMD in the finite sample setting.  In the experimental section we apply these  measures to the problem of kernel hypothesis testing and show that they outperform the MMD.

\section{Connections to the Minimum  Risk}
In this section we establish the connection between the Kernel Score and the  minimum risk associated with the projected embedded distributions. 
This will provide further insight towards the effect of choosing different concave $C$ functions and different projection vectors $\Phi({\bf w})$ on the Kernel Score.
First, we present a  general formulation for the minimum risk of (\ref{eq:risk}) for a proper loss function and show that  we can partition any such risk into two terms
akin to partitioning of the Brier score~\citep{DeGroot, Murphy1972}.
\begin{lemma}
\label{Thm:Brier}
Let $\phi$ be a proper loss  function in the form of (\ref{eq:phieq2}) and ${\hat p^*}({\bf x})$ an estimate of the optimal predictor ${ p^*}({\bf x})$. The  risk $R({\hat p^*})$ can be partitioned into a term that is a measure of calibration $R_{Calibration}$ plus a term that is the minimum risk $R({p^*})$ in the form of 
\begin{eqnarray}
R({\hat p^*}) &=&  \\
	&& \int_{\bf x} P_{{\bf X}}({\bf x}) \left[ P_{{\bf Y}|{\bf X}}(1|{\bf x})\left(\phi({\hat p^*}({\bf x}))-\phi({ p^*}({\bf x}))\right) + P_{{\bf Y}|{\bf X}}(-1|{\bf x})\left(\phi(-{\hat p^*}({\bf x}))-\phi(-{ p^*}({\bf x})) \right) \right] d{\bf x} \nonumber \\
	&+& \int_{\bf x} P_{{\bf X}}({\bf x}) \left[ P_{{\bf Y}|{\bf X}}(1|{\bf x})\phi({ p^*}({\bf x})) + P_{{\bf Y}|{\bf X}}(-1|{\bf x})\phi(-{ p^*}({\bf x}))  \right] d{\bf x} \nonumber \\
&=&	R_{Calibration} + R({p^*}).
\end{eqnarray}
 Furthermore the minimum risk term $R({p^*})$ can be written as 
\begin{eqnarray}
\label{eq:MinRiskClean}
R({p^*})=\int_{\bf x} P_{{\bf X}}({\bf x})  C_\phi^*(P_{Y|{\bf X}}(1|{\bf x}) ) d{\bf x}.
\end{eqnarray}

\end{lemma}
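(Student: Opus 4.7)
The proof is essentially an exercise in add–subtract bookkeeping combined with the defining property of the optimal predictor, so the plan is mostly to unpack the definitions in the right order.

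First I would write out the risk of the estimated predictor directly from (\ref{equ:MinRiskInit}) applied to ${\hat p^*}$:
\begin{equation*}
R({\hat p^*}) = \int_{\bf x} P_{{\bf X}}({\bf x}) \bigl[ P_{{\bf Y}|{\bf X}}(1|{\bf x})\phi({\hat p^*}({\bf x})) + P_{{\bf Y}|{\bf X}}(-1|{\bf x})\phi(-{\hat p^*}({\bf x})) \bigr] d{\bf x}.
\end{equation*}
Then I would add and subtract $P_{{\bf Y}|{\bf X}}(1|{\bf x})\phi({ p^*}({\bf x})) + P_{{\bf Y}|{\bf X}}(-1|{\bf x})\phi(-{ p^*}({\bf x}))$ inside the integral and regroup. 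The ``plus'' pieces assemble exactly into $R({ p^*})$, while the ``minus'' pieces together with the original terms assemble into the stated $R_{\text{Calibration}}$. This gives the first claim of the lemma essentially for free.

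For the closed-form expression (\ref{eq:MinRiskClean}) of the minimum risk, I would fix ${\bf x}$ and recognize that the bracketed integrand in $R({ p^*})$ is precisely the conditional risk $C_\phi(\eta({\bf x}), p^*({\bf x}))$ defined in (\ref{eq:CondRisk}), with $\eta({\bf x}) = P_{Y|{\bf X}}(1|{\bf x})$. Because $p^*({\bf x}) = f_\phi^*(\eta({\bf x}))$ is by construction the minimizer via (\ref{eq:fstarphi}), this bracket equals the minimum conditional risk $C_\phi^*(\eta({\bf x}))$ by (\ref{eq:C*phi}). Pulling this pointwise identity back under the outer integral over ${\bf x}$ against $P_{\bf X}({\bf x})$ yields the claimed formula.

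There is no serious obstacle here; the only thing to be careful about is to justify that the minimum of the conditional risk is really attained by $p^*({\bf x}) = f_\phi^*(\eta({\bf x}))$ at every ${\bf x}$, which follows from the assumption that $\phi$ is a proper loss of the form (\ref{eq:phieq2}), so that (\ref{eq:C*phi}) applies. Once that is noted, substituting and integrating finishes the argument cleanly, with no additional measurability or exchange-of-integration subtleties beyond the standing assumption that the risk integrals exist.
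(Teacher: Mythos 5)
Your proof is correct, and the first half (the add--subtract decomposition into $R_{Calibration}+R(p^*)$) is exactly the paper's argument. For the closed form (\ref{eq:MinRiskClean}) you take a slightly different, more conceptual route: you identify the bracketed integrand as the conditional risk $C_\phi(\eta({\bf x}),p^*({\bf x}))$ of (\ref{eq:CondRisk}) evaluated at its minimizer $p^*({\bf x})=f^*_\phi(\eta({\bf x}))$, and then invoke the definition (\ref{eq:C*phi}) of the minimum conditional risk. The paper instead substitutes the explicit probability-elicitation representation (\ref{eq:phieq2}) of $\phi(v)$ and $\phi(-v)$ into the integrand, together with $\eta({\bf x})=[f^*_\phi]^{-1}(p^*({\bf x}))$, and checks that the derivative terms cancel, leaving $\int P_{\bf X}({\bf x})\,C^*_\phi(\eta({\bf x}))\,d{\bf x}$. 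The two arguments rest on the same fact — that a loss built from (\ref{eq:phieq2}) has $f^*_\phi$ as its optimal link and $C^*_\phi$ as its minimum conditional risk, which is the content of Theorem~\ref{Thm:HamedNuno} and the discussion following it — so your appeal to ``$p^*$ attains the minimum'' is legitimate, not a gap. Your version avoids the algebra; the paper's version has the minor advantage of verifying the cancellation explicitly without presupposing which link is optimal.
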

\begin{proof}
See supplementary material \ref{app:Brier}.
\end{proof}

The following theorem that writes the Kernel Score in terms of the  minimum risk associated with the projected embedded distributions $R^p({p^*})$  is now readily proven.
\begin{Thm}
\label{Thm:KDMinRiskConect}
Let $P$ and $Q$ be two  distributions and choose $C=C^*_{\phi}$. Then
\begin{eqnarray}
\label{eq:KDMinRiskequality}
S_{C^*_{\phi},k,{\cal F},{\Phi({\bf w})}}(P,Q)=  R^p({p^*}),
\end{eqnarray}
where $R^p({p^*})$ is the minimum risk associated with the projected embedded distributions of $P^p$ and $Q^p$.
\end{Thm}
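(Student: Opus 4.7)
The plan is to identify the integrand of the Kernel Score with the minimum conditional risk evaluated at the posterior associated with a two-class problem whose class-conditional densities are the projected embedded distributions $P^p$ and $Q^p$ with equal class priors. Once that identification is made, the claim reduces to the clean form of the minimum risk given in Lemma \ref{Thm:Brier}, equation (\ref{eq:MinRiskClean}).

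More concretely, I would first set up the projected two-class classification problem: let $Y \in \{-1,+1\}$ with $P(Y=1) = P(Y=-1) = 1/2$, let the class-conditionals along the projection direction $\Phi(\mathbf{w})$ be $P^p$ and $Q^p$ respectively, and let $R^p(p^*)$ denote the minimum risk for this problem under the proper loss $\phi$ whose minimum conditional risk is $C^*_\phi$. By the standard Bayes formula, the marginal density of the projected variable $x^p$ is
\begin{equation*}
P_{\mathbf{X}}^p(x^p) \;=\; \frac{P^p(x^p) + Q^p(x^p)}{2},
\end{equation*}
and the class posterior is
\begin{equation*}
P_{Y\mid \mathbf{X}}^p(1\mid x^p) \;=\; \frac{P^p(x^p)}{P^p(x^p) + Q^p(x^p)}.
\end{equation*}

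Next I would invoke Lemma \ref{Thm:Brier}, specifically (\ref{eq:MinRiskClean}), applied to this projected problem, to obtain
\begin{equation*}
R^p(p^*) \;=\; \int P_{\mathbf{X}}^p(x^p)\, C^*_\phi\!\left(P_{Y\mid \mathbf{X}}^p(1\mid x^p)\right) d(x^p)
\;=\; \int \frac{P^p(x^p) + Q^p(x^p)}{2}\, C^*_\phi\!\left( \frac{P^p(x^p)}{P^p(x^p)+Q^p(x^p)} \right) d(x^p).
\end{equation*}
Recognizing the right-hand side as the definition of $S_{C^*_\phi,k,\mathcal{F},\Phi(\mathbf{w})}(P,Q)$ with the choice $C = C^*_\phi$ finishes the proof.

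There is essentially no real obstacle here; the only thing to check carefully is that $C^*_\phi$ satisfies the properties required of $C$ in the definition of the Kernel Score (concavity, symmetry about $1/2$, vanishing at $0$ and $1$, value $1/2$ at $1/2$). These were already recorded in Section 2 via (\ref{eq:Cstarsym}) and the normalization assumption $C^*_\phi(0) = C^*_\phi(1) = 0$, so the substitution $C = C^*_\phi$ is legitimate. The proof is therefore a one-line bookkeeping argument: the Kernel Score is structurally the marginal-weighted integral of $C^*_\phi$ of the class posterior for the projected two-class problem with uniform priors, and Lemma \ref{Thm:Brier} identifies that quantity with the minimum risk $R^p(p^*)$.
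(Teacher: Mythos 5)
Your proof is correct and follows essentially the same route as the paper's: both define the projected two-class problem with equal priors $P_Y(1)=P_Y(-1)=\tfrac{1}{2}$, identify the marginal $\frac{P^p+Q^p}{2}$ and posterior $\frac{P^p}{P^p+Q^p}$, and apply the clean form of the minimum risk from Lemma \ref{Thm:Brier} to match the Kernel Score integrand term by term. Your additional remark verifying that $C^*_\phi$ satisfies the properties required of $C$ in the Kernel Score definition is a sensible bookkeeping step that the paper leaves implicit.
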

\begin{proof}
See supplementary material \ref{app:KscoreMinRisk}.
\end{proof}
We conclude that the minimum risk associated with the projected embedded distributions term $R^p({p^*})$, and in turn the Kernel Score $S_{C^*_{\phi},k,{\cal F},{\Phi({\bf w})}}(P,Q)$,
 are constants  related to the distributions $P^p$ and $Q^p$ 
(determined by the choice of $\Phi({\bf w})$) and the choice of $C_\phi^*$.

The effect of changing $C=C^*_{\phi}$ can now be studied in detail by noting the general result presented in the following theorem \cite{ATightUpperBound, ArbitrarilyTight, book:ProbPatRec}. 
\begin{Thm}
\label{thm:tightestC}
Let $C_\phi^*$ be a continuously differentiable concave symmetric function such that $C_\phi^*(\eta)=C_\phi^*(1-\eta)$ for all $\eta \in [0~1]$, $C_\phi^*(0)=C_\phi^*(1)=0$ and $C_\phi^*(\frac{1}{2})=\frac{1}{2}$. Then $C_\phi^*(\eta) \ge \min(\eta,1-\eta)$ and $R({p^*}) \ge R^*$. Furthermore, for any $\epsilon$ such that 
$R({p^*})-R^* \le \epsilon$ there exists $\delta$ and $C_\phi^*$ where $C_\phi^*(\eta) - \min(\eta,1-\eta) \le \delta$. 
\end{Thm}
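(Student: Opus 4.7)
The plan is to split the statement into its two halves and handle them separately, as they require different techniques (algebraic/monotone for the lower bound, constructive for the approximation).

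For the lower bound $C_\phi^*(\eta)\ge\min(\eta,1-\eta)$, I would exploit concavity together with the three fixed values $C_\phi^*(0)=C_\phi^*(1)=0$ and $C_\phi^*(\tfrac12)=\tfrac12$. On $[0,\tfrac12]$, writing $\eta$ as the convex combination $\eta=(1-2\eta)\cdot 0 + 2\eta\cdot\tfrac12$ and applying concavity directly yields $C_\phi^*(\eta)\ge 2\eta\cdot C_\phi^*(\tfrac12)=\eta$; the symmetric case $\eta\in[\tfrac12,1]$ follows from $C_\phi^*(\eta)=C_\phi^*(1-\eta)$. Together these give $C_\phi^*(\eta)\ge\min(\eta,1-\eta)$ pointwise. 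Integrating this against $P_{\bf X}$ using the clean formula $R(p^*)=\int P_{\bf X}({\bf x})C_\phi^*(\eta({\bf x}))d{\bf x}$ from Lemma \ref{Thm:Brier}, and recognising $R^*=\int P_{\bf X}({\bf x})\min(\eta({\bf x}),1-\eta({\bf x}))d{\bf x}$ from (\ref{eq:zeronemincondrisk}), delivers $R(p^*)\ge R^*$.

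For the tightness half, the plan is: produce, for each $\delta>0$, a specific admissible $C_\phi^*$ with $\sup_\eta\bigl(C_\phi^*(\eta)-\min(\eta,1-\eta)\bigr)\le\delta$. Because both $C_\phi^*$ and $\min(\eta,1-\eta)$ are symmetric about $\tfrac12$, I would parametrise the construction as $C_\delta(\eta)=\tfrac12-\psi_\delta(|\eta-\tfrac12|)$ for a $C^1$, convex, non-decreasing $\psi_\delta\colon[0,\tfrac12]\to[0,\tfrac12]$ satisfying $\psi_\delta(0)=0$, $\psi_\delta'(0)=0$, $\psi_\delta(\tfrac12)=\tfrac12$, and $|\psi_\delta(t)-t|\le\delta$ uniformly. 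A concrete choice is $\psi_\delta(t)=c_\delta\int_0^t\rho_\delta(s)\,ds$, where $\rho_\delta$ is a smooth non-decreasing function with $\rho_\delta(0)=0$ that rises quickly to $1$ on a band of width $O(\delta)$, and $c_\delta$ is a normalizing constant (close to $1$) chosen to enforce $\psi_\delta(\tfrac12)=\tfrac12$. Convexity of $\psi_\delta$ translates directly into concavity of $C_\delta$, and the remaining conditions ($C_\delta(0)=C_\delta(1)=0$, symmetry, $C^1$) are automatic from the ansatz. Once $C_\delta$ is in hand, the pointwise inequality $C_\delta(\eta)-\min(\eta,1-\eta)\le\delta$ integrates to $R(p^*)-R^*\le\delta$, so choosing $\delta\le\epsilon$ closes the argument.

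The main obstacle is in the second half: the boundary condition $C_\phi^*(\tfrac12)=\tfrac12$ coincides with the kinked maximum of $\min(\eta,1-\eta)$, yet symmetry together with $C^1$ smoothness forces $C_\phi^{*\prime}(\tfrac12)=0$. Thus any admissible $C_\phi^*$ must deviate from $\min$ in some neighbourhood of $\tfrac12$, and one cannot simply smooth the tent locally without overshoot or undershoot elsewhere. The integrated-$\rho_\delta$ construction above is designed precisely to localise the deviation: $\rho_\delta(s)$ is within $\delta$ of $1$ except on a tiny interval near $s=0$, so $|\psi_\delta(t)-t|$ stays $O(\delta)$ uniformly, even after the mild renormalisation by $c_\delta=1+O(\delta)$. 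Controlling this renormalisation to preserve the $\delta$-gap — and verifying concavity survives it — is the one delicate check; everything else is a routine verification that $C_\delta$ meets the hypotheses and that the pointwise bound integrates to the desired risk bound.
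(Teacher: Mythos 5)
Your argument is correct, but it is worth noting that the paper does not supply its own proof of this theorem at all --- it simply cites the Bayes-error-bound literature (\cite{ATightUpperBound}, \cite{ArbitrarilyTight}, \cite{book:ProbPatRec}) --- so your self-contained derivation is a genuine addition rather than a restatement. Your first half is the standard concavity argument and is airtight: writing $\eta=(1-2\eta)\cdot 0+2\eta\cdot\tfrac12$ and invoking $C_\phi^*(0)=0$, $C_\phi^*(\tfrac12)=\tfrac12$ gives $C_\phi^*(\eta)\ge\eta$ on $[0,\tfrac12]$, symmetry handles the other half, and integrating against $P_{\bf X}$ via (\ref{eq:MinRiskClean}) and (\ref{eq:zeronemincondrisk}) yields $R(p^*)\ge R^*$. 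Your second half (the mollified tent $C_\delta(\eta)=\tfrac12-\psi_\delta(|\eta-\tfrac12|)$ with $\psi_\delta$ convex, $C^1$, $\psi_\delta'(0)=0$) is also sound: composition of a convex nondecreasing $\psi_\delta$ with the convex map $\eta\mapsto|\eta-\tfrac12|$ gives convexity, hence concavity of $C_\delta$, the renormalisation by a positive constant $c_\delta$ preserves this, and the pointwise gap $t-\psi_\delta(t)\le\delta$ integrates to $R(p^*)-R^*\le\delta$ because $\int P_{\bf X}=1$. Note, however, that the paper already contains an explicit constructive family achieving the same tightness, namely the polynomials $C_{Poly-n}^*$ of Theorem \ref{Thm:TightestBoundCR}, whose uniform convergence to $\min(\eta,1-\eta)$ is proved in supplementary section \ref{app:Polyn}; citing that family would let you dispense with the mollifier construction entirely. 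One small caveat: if you intend these $C_\delta$ to also be admissible in the Kernel Score definition of Section 3, that definition demands \emph{strict} concavity, so you should take $\rho_\delta$ strictly increasing on all of $[0,\tfrac12]$ rather than saturating at $1$; for Theorem \ref{thm:tightestC} as stated, plain concavity suffices and your construction needs no change.
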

\begin{proof}
See Section 2 of \cite{ATightUpperBound}, Section 2, Theorems 2 and 4 of \cite{ArbitrarilyTight},  and Chapter 2 of \cite{book:ProbPatRec}.
\end{proof}
The above theorem, when especially applied to the projected embedded distributions, states that the minimum risk associated with the projected embedded 
distributions $R^p({p^*})$ is an upper bound on the Bayes risk associated with the projected embedded distributions ${R^{p}}^*$ 
and as $C_\phi^*$ is made arbitrarily close to $C_{0/1}^*=\min(\eta,1-\eta)$ this upper bound is  tight.

In summary, using different $\Phi({\bf w})$ in the Kernel Score formulation, changes the projected embedded distributions of $P^p$ and $Q^p$ and the Bayes risk associated with these projected embedded distributions ${R^{p}}^*$. Using different $C_\phi^*$ changes the upper bound estimate of this Bayes risk $R^p({p^*})$.

\subsubsection{ Tighter Bounds on the Bayes Error}
\label{sec:TighterBounds22}
We can easily verify that, in general,  the minimum risk is equal to the Bayes error when $C_\phi^*=C_{0/1}^*=\min(\eta,1-\eta)$, leading to the smallest possible minimum risk for fixed data distributions. Unfortunately, $C_{0/1}^*=\min(\eta,1-\eta)$ is not continuously differentiable and so we consider other $C_\phi^*$ functions.
For example  when $C_{LS}^*(\eta)=-2\eta(\eta-1)$ is used,
the minimum risk simplifies to
\begin{eqnarray}
R_{C_{LS}^*}(p^*)=\int \frac{P_{{\bf X}|Y}({\bf x}|1)P_{{\bf X}|Y}({\bf x}|-1)}{(P_{{\bf X}|Y}({\bf x}|1)+P_{{\bf X}|Y}({\bf x}|-1))} d{\bf x},
\end{eqnarray}
which  is equal to the asymptotic nearest neighbor bound \cite{book:Fukunaga,NNClassification} on the Bayes error. 
We have used the notation $R_{C_{LS}^*}(p^*)$ to make it clear that this is the minimum risk associated with the $C_{LS}^*$ function.

\begin{figure*}[t]
  \centering
		\includegraphics[width=4in]{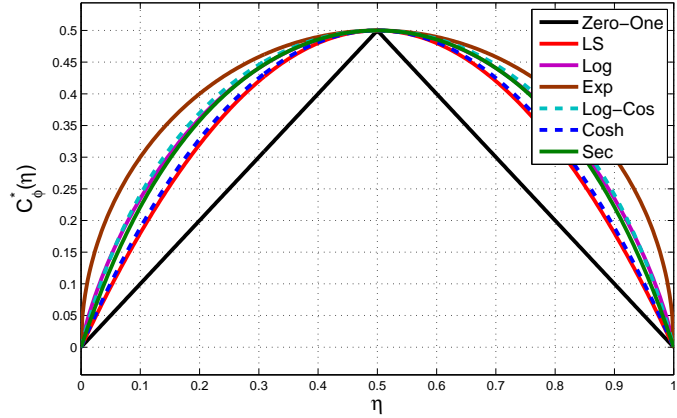}
 \caption{\protect{\footnotesize{Plot of the $C^*_{\phi}(\eta)$ in Table-\ref{tab:JTableParameteres}.}}}
  \label{fig:JTableParamPlot}
\end{figure*}

From Theorem \ref{thm:tightestC} we know that when the minimum risk is computed under  other $C_{\phi}^*$ functions, a list of which is presented in 
Table-\ref{tab:JTableParameteres},  an upper bound on  the Bayes error is  being computed. 
Also, the $C_{\phi}^*$ that are closer to  $C_{0/1}^*$ result in minimum risk formulations that provide tighter bounds on the Bayes error.  Figure-\ref{fig:JTableParamPlot} shows that  $C_{LS}^*$, $C_{Cosh}^*$, $C_{Sec}^*$, $C_{Log}^*$, $C_{Log-Cos}^*$ and $C_{Exp}^*$ are in order the closest to  $C_{0/1}^*$ and the corresponding minimum-risk formulations in Table-\ref{tab:RefFormulasDiffJ4} provide, in the same order, tighter bounds on the Bayes error. This can also be directly verified by noting that $R_{C_{Exp}^*}$ is equal to the Bhattacharyya bound \cite{book:Fukunaga}, $R_{C_{LS}^*}$ is equal to the asymptotic nearest neighbor bound \cite{book:Fukunaga,NNClassification}, $R_{C_{Log}^*}$ is equal to the Jensen-Shannon divergence \cite{JenShannonLin} and $R_{C_{Log-Cos}^*}$ is similar to the bound in \cite{ArbitrarilyTight}. These four formulations have been independently studied in the literature and the fact that they produce upper bounds on the Bayes error has been directly verified. Here we have rederived these four measures by resorting to the concept of minimum risk and proper loss functions which not only allows us to provide a  unified approach to these different methods but has also led to a systematic method for deriving other novel  bounds on the Bayes error, namely  $R_{C_{Cosh}^*}$ and  $R_{C_{Sec}^*}$. 

\begin{table}[tbp]
  \centering
  \caption{\protect\footnotesize{ $C^*_{\phi}(\eta)$ specifics used to compute the minimum-risk.}}
  \begin{tabular}{|c|c|}
    \hline
    Method & $C^*_{\phi}(\eta)$  \\
		\hline
    \hline
    LS & $-2\eta(\eta-1)$ \\
    \hline
    Log & $-0.7213(\eta\log(\eta)-(1-\eta)\log(1-\eta))$ \\
    \hline
    Exp & $\sqrt{\eta(1-\eta)}$ \\
    \hline
    Log-Cos & $(\frac{1}{2.5854})\log(\frac{\cos(2.5854(\eta-\frac{1}{2}))}{\cos(\frac{2.5854}{2})})$ \\
    \hline
    Cosh & $-\cosh(1.9248(\frac{1}{2}-\eta))+\cosh(\frac{-1.9248}{2})$ \\
    \hline
    Sec &  $-\sec(1.6821(\frac{1}{2}-\eta))+\sec(\frac{-1.6821}{2})$ \\
    \hline
  \end{tabular}
  \label{tab:JTableParameteres}
\end{table}


\begin{table}[tbp]
\centering
\caption{\protect\footnotesize{Minimum-risk  for different $C^*_{\phi}(\eta)$ }}
\resizebox{\textwidth}{!}{ 
\begin{tabular}{|c|c|}
    \hline
    $C^*_{\phi}(\eta)$ & $R_{C_{\phi}^*}$ \\
		\hline
    \hline
    Zero-One & Bayes Error \\
    &  \\
    \hline
    LS & $\int \frac{P({\bf x}|1)P({\bf x}|-1)}{P({\bf x}|1)+P({\bf x}|-1)} d{\bf x}$ \\
    &  \\
    \hline
    Exp & $\frac{1}{2}\int \sqrt{P({\bf x}|1)P({\bf x}|-1)} d{\bf x}$ \\
    &  \\
    \hline
    Log & $-\frac{0.7213}{2}D_{KL}(P({\bf x}|1)||P({\bf x}|1)+P({\bf x}|-1))-\frac{0.7213}{2}D_{KL}(P({\bf x}|-1)||P({\bf x}|1)+P({\bf x}|-1))$ \\
    &  \\
    \hline
    Log-Cos & $\int \frac{P({\bf x}|1)+P({\bf x}|-1)}{2} \left[ \frac{1}{2.5854}\log\left(\frac{\cos(\frac{2.5854(P({\bf x}|1)-P({\bf x}|-1))}{2(P({\bf x}|1)+P({\bf x}|-1))})}{cos(\frac{2.5854}{2})}\right) \right] d{\bf x}$ \\
    &  \\
    \hline
    Cosh & $\int \frac{P({\bf x}|1)+P({\bf x}|-1)}{2} \left[ -\cosh(\frac{1.9248(P({\bf x}|-1)-P({\bf x}|1))}{2(P({\bf x}|1)+P({\bf x}|-1))}) +\cosh(\frac{-1.9248}{2}) \right] d{\bf x}$ \\
    &  \\
    \hline
    Sec & $\int \frac{P({\bf x}|1)+P({\bf x}|-1)}{2} \left[ -\sec(\frac{1.6821(P({\bf x}|-1)-P({\bf x}|1))}{2(P({\bf x}|1)+P({\bf x}|-1))}) +\sec(\frac{-1.6821}{2}) \right]  d{\bf x}$ \\
    &  \\
    \hline
\end{tabular}
}
\label{tab:RefFormulasDiffJ4}
\end{table}

Next, we   demonstrate a general procedure for deriving a class of polynomial functions   $C_{Poly-n}^*(\eta)$ that are increasingly  and arbitrarily close 
to $C_{0/1}^*(\eta)$. 
\begin{Thm}
\label{Thm:TightestBoundCR}
Let 
\begin{eqnarray}
C_{Poly-n}^*(\eta)=K_2(\int  Q(\eta) d(\eta) +K_1\eta) 
\end{eqnarray}
where 
\begin{eqnarray}
&&Q(\eta) = \int-(\eta(1-\eta))^n d(\eta), \\
&&K_1 = -Q(\frac{1}{2}), \\
&&K_2 = \frac{\frac{1}{2}}{\left. (\int  Q(\eta) d(\eta) +K_1\eta)\right|_{\eta=\frac{1}{2}}}.
\end{eqnarray}
Then 
$R_{C_{Poly-n}^*} \ge R_{C_{Poly-(n+1)}^*} \ge R^*$ for all $n \ge 0$ and $R_{C_{Poly-n}^*}$ converges to $R^*$ as $n \rightarrow \infty$. 
\end{Thm}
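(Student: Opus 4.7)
The plan is to reduce both assertions to pointwise statements about $C_{Poly-n}^*$ on $[0,1]$, then transfer to the risks through the representation
\[
R_{C_{\phi}^*}(p^*) = \int P_{\bf X}({\bf x})\, C_\phi^*\bigl(P_{Y|{\bf X}}(1|{\bf x})\bigr)\, d{\bf x}
\]
from (\ref{eq:MinRiskClean}) in Lemma \ref{Thm:Brier}. Because this functional is linear and monotone in $C_\phi^*$, the chain $R_{C_{Poly-n}^*} \ge R_{C_{Poly-(n+1)}^*} \ge R^*$ will follow from $C_{Poly-n}^*(\eta) \ge C_{Poly-(n+1)}^*(\eta) \ge \min(\eta,1-\eta)$ pointwise, and the convergence of risks will follow from $C_{Poly-n}^*(\eta) \to \min(\eta,1-\eta)$ together with dominated convergence (noting $0 \le C_{Poly-n}^* \le 1/2$).

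First I would verify that each $C_{Poly-n}^*$ meets the hypotheses of Theorem \ref{thm:tightestC}. Direct differentiation gives $(C_{Poly-n}^*)''(\eta) = -K_2(\eta(1-\eta))^n$, which is nonpositive since $K_2 = (1/2)/\int_0^{1/2} t^{n+1}(1-t)^n\, dt > 0$. The prescription $K_1 = -Q(1/2)$ forces $(C_{Poly-n}^*)'(1/2) = K_2(Q(1/2)+K_1) = 0$, and combined with the symmetry of $(\eta(1-\eta))^n$ about $1/2$ this yields $C_{Poly-n}^*(\eta) = C_{Poly-n}^*(1-\eta)$. Fixing the integration constants so $C_{Poly-n}^*(0) = 0$ therefore gives $C_{Poly-n}^*(1) = 0$, and the normalization by $K_2$ enforces $C_{Poly-n}^*(1/2) = 1/2$. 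Theorem \ref{thm:tightestC} then immediately supplies $C_{Poly-n}^*(\eta) \ge \min(\eta,1-\eta)$ and $R_{C_{Poly-n}^*} \ge R^*$.

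The main obstacle is the pointwise monotonicity $C_{Poly-n}^* \ge C_{Poly-(n+1)}^*$. By symmetry it suffices to prove it on $[0,1/2]$. Let $D_n = C_{Poly-n}^* - C_{Poly-(n+1)}^*$, so $D_n(0) = D_n(1/2) = 0$, $D_n'(1/2) = 0$, and
\[
D_n''(\eta) = (\eta(1-\eta))^n\bigl[K_2^{(n+1)}\eta(1-\eta) - K_2^{(n)}\bigr].
\]
The ratio $K_2^{(n)}/K_2^{(n+1)}$ equals $\int_0^{1/2} t^{n+2}(1-t)^{n+1}\,dt / \int_0^{1/2} t^{n+1}(1-t)^n\,dt$, the average of $t(1-t)$ against a positive density on $[0,1/2)$ where $t(1-t) < 1/4$, hence is strictly less than $1/4$. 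Since $\eta(1-\eta)$ is strictly increasing on $(0,1/2)$, the bracket changes sign exactly once, so $D_n''$ is negative then positive on $(0,1/2)$. If $D_n'(0) < 0$, the induced shape of $D_n'$ would force $D_n' \le 0$ on $[0,1/2]$, contradicting $D_n(0) = D_n(1/2) = 0$; hence $D_n'(0) \ge 0$ and $D_n'$ has a unique zero $\eta_1$ in $(0,1/2)$, so $D_n$ rises to a single interior maximum and then returns to $0$, giving $D_n \ge 0$ on $[0,1/2]$, as desired.

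For convergence, Laplace's method applied to the integrands $e^{n\log(t(1-t))}$, which are uniquely maximized at $t = 1/2$, shows that the measures $-(C_{Poly-n}^*)''(\eta)\,d\eta = K_2(\eta(1-\eta))^n\,d\eta$ concentrate at $1/2$ with total mass tending to $2$; that is, they converge weakly to $2\delta_{1/2}$, which is precisely the distributional second derivative of $-\min(\eta,1-\eta)$. Integrating twice using $C_{Poly-n}^*(0)=0$ and $C_{Poly-n}^*(1)=0$ (which pin down $(C_{Poly-n}^*)'(0)\to 1$) yields $C_{Poly-n}^*(\eta)\to\min(\eta,1-\eta)$ pointwise, and uniformly since pointwise-convergent concave functions on a compact interval converge uniformly. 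Dominated convergence applied to (\ref{eq:MinRiskClean}) then delivers $R_{C_{Poly-n}^*}\to R^*$, completing the argument.
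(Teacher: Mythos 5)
Your argument is correct, and while it shares the paper's overall skeleton---check the hypotheses of Theorem \ref{thm:tightestC} to obtain $R_{C_{Poly-n}^*}\ge R^*$, prove the pointwise ordering $C_{Poly-n}^*\ge C_{Poly-(n+1)}^*$, prove pointwise convergence to $\min(\eta,1-\eta)$, and transfer everything through the linear, monotone functional in (\ref{eq:MinRiskClean})---the two substantive steps are carried out differently. For the ordering, the paper compares the \emph{unnormalized} functions, whose second derivatives $-(\eta(1-\eta))^n$ are pointwise ordered, integrates twice via mean-value-theorem arguments, and then asserts that rescaling each function so its maximum equals $\tfrac12$ preserves the inequality; that last step is the delicate one, since dividing two ordered nonnegative functions by their different maxima does not in general preserve the order. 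You instead work with the normalized functions throughout: the identity $D_n''(\eta)=(\eta(1-\eta))^n\bigl[K_2^{(n+1)}\eta(1-\eta)-K_2^{(n)}\bigr]$ together with the observation that $K_2^{(n)}/K_2^{(n+1)}$ is an average of $t(1-t)<\tfrac14$ gives exactly one sign change of $D_n''$ on $(0,\tfrac12)$, and the boundary data $D_n(0)=D_n(\tfrac12)=D_n'(\tfrac12)=0$ then force $D_n\ge 0$. This is tighter and in fact repairs the normalization issue the paper glosses over. For the convergence, the paper takes a term-by-term limit of the polynomial and checks $K_1K_2\to 1$, whereas you identify the second-derivative measures as converging weakly to $2\delta_{1/2}$ and integrate back using $(C_{Poly-n}^*)'(0)\to 1$; both are sound, yours generalizes more readily beyond polynomial families while the paper's is more elementary and self-contained.
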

\begin{proof}
See supplementary material \ref{app:Polyn}.
\end{proof}

As an example, we  derive $C_{Poly-2}^*(\eta)$ by following the above procedure  
\begin{eqnarray}
{C^*_{Poly-2}}''(\eta)=-(\eta(1-\eta))^2=-(\eta^2+\eta^4-2\eta^3).
\end{eqnarray} 
From this we have
\begin{eqnarray}
{C_{Poly-2}^*}'(\eta)=-(\frac{1}{3}\eta^3+\frac{1}{5}\eta^5-\frac{2}{4}\eta^4) + K_1.
\end{eqnarray}
Satisfying ${C_{Poly-2}^*}'(\frac{1}{2})=0$ we find $K_1=\frac{1}{60}$. Therefore,
\begin{eqnarray}
C_{Poly-2}^*(\eta)=K_2(-\frac{1}{12}\eta^4 -\frac{1}{30}\eta^6 +\frac{1}{10}\eta^5 +\frac{1}{60}\eta).
\end{eqnarray}
Satisfying $C_{Poly-2}^*(\frac{1}{2})=\frac{1}{2}$ we find $K_2=\frac{960}{11}$.

\begin{figure*}[t]
  \centering
     \includegraphics[width=4in]{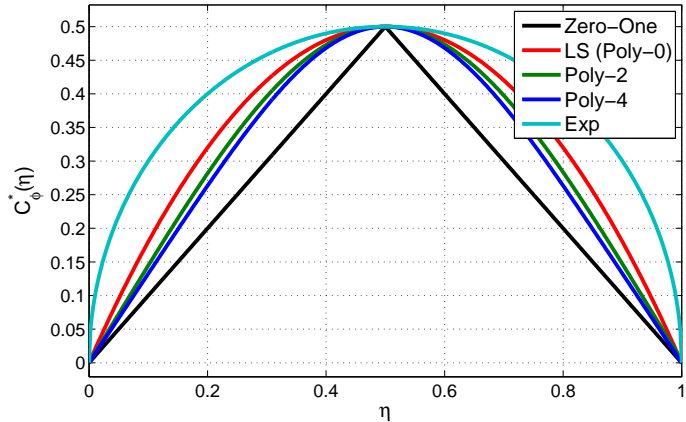} 
 \caption{\protect{\footnotesize{Plot of $C_{Poly-n}^*(\eta)$.}}}
  \label{fig:PlotPolinomialJ}
\end{figure*}

Figure-\ref{fig:PlotPolinomialJ} plots $C_{Poly-2}^*(\eta)$ which shows that, as expected, it is a closer approximation to $C_{0/1}^*(\eta)$ when compared to            $C_{LS}^*(\eta)$. Following the same steps, it is readily shown  that $C_{LS}^*(\eta)=C_{Poly-0}^*(\eta)$, meaning that $C_{LS}^*(\eta)$ is derived from the special case of $n=0$. 

As we increase $n$, we increase the order of the resulting polynomial which provides a tighter fit to $C_{0/1}^*(\eta)$. Figure-\ref{fig:PlotPolinomialJ} also plots $C_{Poly-4}^*(\eta)$ 
\begin{eqnarray}
\label{eq:CPoly4}
&&C_{Poly-4}^*(\eta)= \\
&&1671.3(-\frac{1}{90}\eta^{10} +\frac{1}{18}\eta^9 -\frac{3}{28}\eta^8 +\frac{2}{21}\eta^7 -\frac{1}{30}\eta^6 +\frac{1}{1260}\eta)  \nonumber
\end{eqnarray}
which is an even closer approximation to $C_{0/1}^*(\eta)$. 
Table-\ref{tab:RefFormulasDiffJPolyn} shows the corresponding minimum-risk $R_{C_{Poly-n}^*}(p^*)$ for different $C_{Poly-n}^*(\eta)$ functions, with 
$R_{C_{Poly-4}^*}(p^*)$ providing
the tightest bound on the Bayes error.  Arbitrarily tighter bounds are possible by simply using $C_{Poly-n}^*(\eta)$ with larger $n$.

\begin{table}[tbp]
\centering
\caption{\protect\footnotesize{Minimum-risk  for different $C_{Poly-n}^*(\eta)$ }}
\resizebox{\textwidth}{!}{ 
\begin{tabular}{|c|c|}
    \hline
    $C^*_{\phi}(\eta)$ & $R_{C_{\phi}^*}$ \\
		\hline
    \hline
    Zero-One & Bayes Error \\
    &  \\
    \hline
    Poly-0 (LS) & $\int \frac{P({\bf x}|1)P({\bf x}|-1)}{P({\bf x}|1)+P({\bf x}|-1)} d{\bf x}$ \\
    &  \\
    \hline
    Poly-2 & $\frac{K_2}{2} \int -\frac{P({\bf x}|1)^4}{12(2P({\bf x}))^3} - \frac{P({\bf x}|1)^6}{30(2P({\bf x}))^5} +\frac{P({\bf x}|1)^5}{10(2P({\bf x}))^4} +K_1P({\bf x}|1) d{\bf x}   $ \\
    & $K_1=0.0167,K_2=87.0196,P({\bf x})=\frac{P({\bf x}|1)+P({\bf x}|-1)}{2}$ \\
    & \\
    \hline
    Poly-4 & $\frac{K_2}{2} \int -\frac{P({\bf x}|1)^{10}}{90(2P({\bf x}))^9} +\frac{P({\bf x}|1)^9}{18(2P({\bf x}))^8} -\frac{3P({\bf x}|1)^8}{28(2P({\bf x}))^7}     +\frac{2P({\bf x}|1)^7}{21(2P({\bf x}))^6} -\frac{P({\bf x}|1)^6}{30(2P({\bf x}))^5}+K_1P({\bf x}|1)  d{\bf x}$ \\
    & $K_1=7.9365\times10^{-4},K_2=1671.3,P({\bf x})=\frac{P({\bf x}|1)+P({\bf x}|-1)}{2}$ \\
    & \\
    \hline
\end{tabular}
}
\label{tab:RefFormulasDiffJPolyn}
\end{table}

Such arbitrarily tight bounds on the Bayes error  are important in a number of applications such as  in feature selection and extraction \cite{NUNOMaxDiversityNIPS, NunoNaturalFeatures,NewFeatPers,FeatMutual}, information theory \cite{Fano, fdivrisk, book:InfoTheory, minimaxrisk}, classification and regression \cite{KLboost,ProjectClass,FriedmanPersuit}, etc. 
In the experiments section we specifically show how using $C^*_{\phi}$ with tighter bounds on the Bayes error results in better performance on a feature selection and classification problem. We then consider the effect of using projection vectors $\Phi({\bf w})$ that are more discriminative, such as the normalized kernel Fisher discriminant projection vector or normalized kernel SVM projection vector described in Lemma {\ref{thm:validchoices}}, rather than the canonical projection vector 
of $\Phi({\bf w})=\frac{({\bm \mu}_P-{\bm \mu}_Q)}{||({\bm \mu}_P-{\bm \mu}_Q)||_{\cal H}}$. We show that these more discriminative projection vectors $\Phi({\bf w})$  result in significantly improved performance on a set of kernel hypothesis testing experiments.

\section{Computing The Kernel Score and Kernel Divergence in Practice }
In most applications the distributions of $P$ and $Q$ are not directly known and are solely  represented through a set of sample points. We assume that the data points 
$\{{\bf x}_1, ..., {\bf x}_{n_1}\}$ are  sampled from  $P$ and the data points $\{{\bf x}_1, ..., {\bf x}_{n_2}\}$ are sampled from $Q$.
Note that the Kernel Score can be  written as
\begin{eqnarray}
S_{C^*_{\phi},k,{\cal F},{\Phi({\bf w})}}(P,Q)={\mathbf E}_{Z}\left[ C\left( \frac{P^p(x^p)}{P^p(x^p) + Q^p(x^p)}  \right)  \right],
\end{eqnarray}
where the expectation is over the distribution defined by $P_Z(z)=\frac{P^p(x^p) + Q^p(x^p)}{2}$. The empirical Kernel Score and empirical Kernel Divergence can now be written as
\begin{eqnarray}
\label{KScoreEMP}
{\hat S}_{C^*_{\phi},k,{\cal F},{\Phi({\bf w})}}(P,Q)&=&\frac{1}{n}\sum_{i=1}^{n}  C\left( \frac{P^p(x^p_i)}{P^p(x^p_i) + Q^p(x^p_i)}  \right)  \\
\label{KDEMP}
\widehat{KD}_{C,k,{\cal F},{\Phi({\bf w})}}(P,Q) &=& \frac{1}{2} -  {\hat S}_{C,k,{\cal F},{\Phi({\bf w})}}(P,Q),
\end{eqnarray}
where $n=n_1+n_2$ and $x^p_i$ is the projection of $\Phi({\bf x}_i)$ onto $\Phi({\bf w})$.

Calculating $x^p_i$ in the above formulation using equation (\ref{eq:FindxpwK}) is still not possible because we generally don't know  $\Phi({\bf w})$ and ${\bf w}$.
A similar problem exists for the MMD. Nevertheless the MMD \cite{TwoMMD} is estimated in practice as 
\begin{eqnarray}
\label{eq:MMDPrac}
&&\widehat{MMD}_{\cal F}(P,Q) = ||\hat{\bm \mu}_P - \hat{\bm \mu}_Q||^2_{\cal H} \\
&=&\frac{1}{n_1n_1}\sum_{i=1}^{n_1}\sum_{j=1}^{n_1}K({\bf x}_i{\bf x}_j) -\frac{2}{n_1n_2}\sum_{i=1}^{n_1}\sum_{j=1}^{n_2}K({\bf x}_i{\bf x}_j) + \frac{1}{n_2n_2}\sum_{i=1}^{n_2}\sum_{j=1}^{n_2}K({\bf x}_i{\bf x}_j).
\end{eqnarray}
In view of Lemma \ref{lemma:AlternateMMD} the MMD can be equivalently estimated  as
\begin{eqnarray}
\label{eq:MMDaltPrac1}
\widehat{MMD}_{\cal F}(P,Q)=(\hat \mu^p_P - \hat \mu^p_Q)^2,
\end{eqnarray}
where 
\begin{eqnarray}
\label{eq:MMDaltPrac2}
\hat \mu^p_P=\frac{\frac{1}{n_1n_1}\sum_{i=1}^{n_1}\sum_{j=1}^{n_1}K({\bf x}_i{\bf x}_j) -\frac{1}{n_1n_2}\sum_{i=1}^{n_1}\sum_{j=1}^{n_2}K({\bf x}_i{\bf x}_j)}{T},
\end{eqnarray}
\begin{eqnarray}
\label{eq:MMDaltPrac3}
\hat \mu^p_Q=\frac{\frac{1}{n_1n_2}\sum_{i=1}^{n_1}\sum_{j=1}^{n_2}K({\bf x}_i{\bf x}_j) - \frac{1}{n_2n_2}\sum_{i=1}^{n_2}\sum_{j=1}^{n_2}K({\bf x}_i{\bf x}_j)}{T}
\end{eqnarray}
and
\begin{eqnarray}
\label{eq:MMDaltPrac4}
T=\sqrt{\frac{1}{n_1n_1}\sum_{i=1}^{n_1}\sum_{j=1}^{n_1}K({\bf x}_i{\bf x}_j) -\frac{2}{n_1n_2}\sum_{i=1}^{n_1}\sum_{j=1}^{n_2}K({\bf x}_i{\bf x}_j) + \frac{1}{n_2n_2}\sum_{i=1}^{n_2}\sum_{j=1}^{n_2}K({\bf x}_i{\bf x}_j)}.
\end{eqnarray}
It can easily be verified that equations (\ref{eq:MMDaltPrac1})-(\ref{eq:MMDaltPrac4}) and equation (\ref{eq:MMDPrac}) are equivalent. 
This equivalent method for calculating the MMD can be elaborated as projecting the two embedded sample sets onto  $\Phi({\bf w})=\frac{(\hat{\bm \mu}_P-\hat{\bm \mu}_Q)}{||(\hat{\bm \mu}_P-\hat{\bm \mu}_Q)||}$, estimating   the  means $\hat\mu^p_P$ and $\hat\mu^p_Q$ of the  projected embedded sample sets
and then finding the distance between these estimated  means. This  might seem like over
complicating the original  procedure. Yet, it serves to show that the  MMD is solely measuring the distance between the means while  disregarding all the other information available regarding the projected embedded distributions. Similarly, assuming that 
$\Phi({\bf w})=\frac{(\hat{\bm \mu}_P-\hat{\bm \mu}_Q)}{||(\hat{\bm \mu}_P-\hat{\bm \mu}_Q)||}$, $x^p_i$ can now be estimated as
\begin{eqnarray}
\label{eq:projctedxpiPrac}
x^p_i&&\!\!\!\!\!\!=<\Phi({\bf x}_i),{\bf w}>=<\Phi({\bf x}_i),\frac{({\hat{\bm \mu}_P}-{\hat{\bm \mu}_Q})}{||({\hat{\bm \mu}_P}-{\hat{\bm \mu}_Q})||}> \\
&&=\frac{<\Phi({\bf x}_i),{\hat{\bm \mu}_P}>-<\Phi({\bf x}_i),{\hat{\bm \mu}_Q}>}{||({\hat{\bm \mu}_P}-{\hat{\bm \mu}_Q})||} \\
&&=\frac{\frac{1}{n_1}\sum_{j=1}^{n_1}<\Phi({\bf x}_i),\Phi({\bf x}_j)>-\frac{1}{n_2}\sum_{j=1}^{n_2}<\Phi({\bf x}_i),\Phi({\bf x}_j)>}{T} \\
\label{eq:projctedxpiPracLAST}
&&=\frac{\frac{1}{n_1}\sum_{j=1}^{n_1}K({\bf x}_i,{\bf x}_j)-\frac{1}{n_2}\sum_{j=1}^{n_2}K({\bf x}_i,{\bf x}_j)}{T}.
\end{eqnarray}
Once the $x^p_i$ are found for all $i$ using equation (\ref{eq:projctedxpiPracLAST}), estimating other statistics  such as the variance is trivial. For example, the   variances of the projected embedded distributions can now be estimated as
\begin{eqnarray}
\label{eq:ProjectedVariance11}
(\hat \sigma^p_P)^2=\frac{1}{n_1}\sum_{i=1}^{n_1}(x^p_i-\hat \mu^p_P)^2 \\
\label{eq:ProjectedVariance22}
(\hat \sigma^p_Q)^2=\frac{1}{n_2}\sum_{i=1}^{n_2}(x^p_i-\hat \mu^p_Q)^2.
\end{eqnarray}
In light of this, the empirical Bhattacharyya Kernel Score and empirical Bhattacharyya Kernel Divergence can now be readily calculated in practice as
\begin{eqnarray}
\label{eq:battarExp2EMP}
&&\hat S_{C,k,{\cal F},{\Phi({\bf w})}}(P,Q)= \frac{1}{2}\cdot e^{(B)}, \\
&&\widehat{KD}_{C_{Exp},k,{\cal F},{\Phi({\bf w})}}(P,Q)=\frac{1}{2} -\frac{1}{2}\cdot e^{(B)}, \\
&& \hat B=\frac{1}{4}\log \left ( \frac{1}{4} \left (\frac{( \hat \sigma^p_P)^2}{( \hat \sigma^p_Q)^2}+\frac{( \hat \sigma^p_Q)^2}{( \hat \sigma^p_P)^2}+2 \right)\right )+\frac{1}{4} \left ( \frac{( \hat \mu^p_P- \hat \mu^p_Q)^2}{( \hat \sigma^p_P)^2 + ( \hat \sigma^p_Q)^2} \right).
\end{eqnarray} 
Finally, the empirical Kernel Score of equation (\ref{KScoreEMP})  and the empirical Kernel Divergence of equation (\ref{KDEMP}) can be calculated in practice after finding $P^p(x^p_i)$ and $Q^p(x^p_i)$ using  any one dimensional probability model.

Note that in the above formulations we  used the canonical $\Phi({\bf w})=\frac{(\hat{\bm \mu}_P-\hat{\bm \mu}_Q)}{||(\hat{\bm \mu}_P-\hat{\bm \mu}_Q)||}$.
A similar approach is possible  for other valid choices of $\Phi({\bf w})$. 
Namely, the projection vector $\Phi({\bf w})$ associated with the  kernel Fisher discriminant 
can be found in the form of
\begin{eqnarray}
\Phi({\bf w})=\sum_{j=1}^{n} \alpha_j \Phi({\bf x}_j)
\end{eqnarray}
using Algorithm 5.16 in \cite{Book:KernelMethodsforPatternAnalysis}. In this case $x^p_i$  can be found as
\begin{eqnarray}
\label{eq:xpiLDAEMP}
x^p_i=\frac{<\Phi({\bf w}), \Phi({\bf x}_i)>}{||\Phi({\bf w})||} = \frac{\sum_{j=1}^{n} \alpha_j K({\bf x}_i,{\bf x}_j)}{||\Phi({\bf w})||},
\end{eqnarray}
where
\begin{eqnarray}
||\Phi({\bf w})|| = \sqrt{\sum_{i=1}^{n}\sum_{j=1}^{n} \alpha_i \alpha_j K({\bf x}_i,{\bf x}_j)}.
\end{eqnarray}

The projection vector $\Phi({\bf w})$ associated  with the kernel SVM   can  also be found in the form of
\begin{eqnarray}
\Phi({\bf w})=\sum_{j \in SV} \alpha_j \phi({\bf x_j})
\end{eqnarray}
using standard algorithms \cite{SVMSolver, SVMGuide}, where $SV$ is the set of support vectors. In this case the  $x^p_i$ can be found using 
equation (\ref{eq:xpiLDAEMP}) calculated over the support vectors.

\section{One-Class Classifier for Kernel Hypothesis Testing}	
\label{sec:OneClass}
From Theorem \ref{thm:StrictDivProperty} we conclude that the Kernel Divergence is   injective similar to the MMD. This means that the Kernel Divergence can 
be directly thresholded and used in hypothesis testing. We showed that while the MMD simply measures the distance between the means of the projected embedded distributions, the Bhattacharyya Kernel Divergence (BKD) incorporates information about both the means and  variances of the two projected embedded distributions. 
We also showed that in general the  Kernel Divergence (KD) provides a measure related to the minimum risk of the two projected embedded distributions. 
Each one of these measures takes into account a different aspect of the two projected embedded distributions in relation to each other. We can integrate all of these measures into our hypothesis test by constructing a vector where each element is a different measure and learn a one-class classifier for this vector.  In the hypothesis testing experiments of Section \ref{sec:HypoTests}, we constructed the vectors [MMD, KD] and [MMD, BKD]  
and implemented a simple one-class nearest neighbor classifier with infinity norm ~\citep{OneClassPhd} as depicted in Figure \ref{fig:RejectRegion}.

\begin{figure*}[t]
  \centering
		\includegraphics[width=3in]{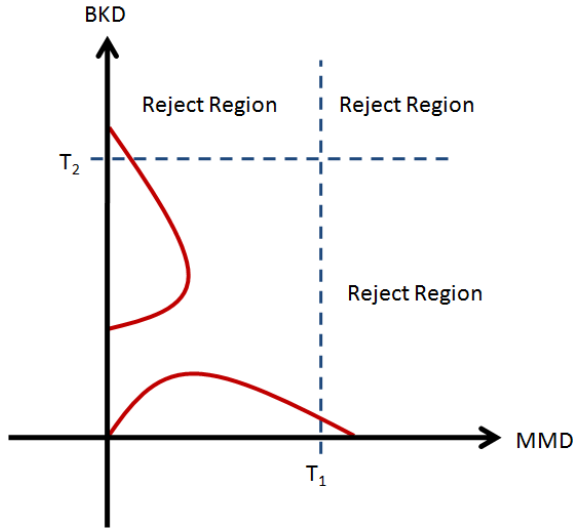}
 \caption{\protect{\footnotesize{Hypothesis testing using a one-class nearest neighbor classifier with infinity norm. The thresholds $T_1$ and $T_2$,  on the MMD and BKD axis, define the region where the null hypothesis is rejected. }}}
  \label{fig:RejectRegion}
\end{figure*}

\section{Experiments}
In this section we include various experiments that confirm different theoretical aspects of the Kernel Score and Kernel Divergence.

\subsection{Feature selection experiments}
Different bounds on the Bayes error are used in feature selection and ranking algorithms
 \cite{NUNOMaxDiversityNIPS, NunoNaturalFeatures,NewFeatPers,FeatMutual, FeatRankEntropy}. In this section we show that the tighter bounds we have derived, namely $C^*_{Poly-2}$ and $C^*_{Poly-4}$, allow for improved feature selection and ranking. The experiments used ten binary UCI data sets of relatively small size: (\#1)  Habermanӳ survival,(\#2) original Wisconsin breast cancer , (\#3) tic-tac-toe , (\#4) sonar,
(\#5) Pima-diabetes , (\#6) liver disorder , (\#7) Cleveland heart disease , (\#8) echo-cardiogram , (\#9) breast cancer prognostic, and (\#10) breast cancer diagnostic.

Each data set was split into five folds, four of which were used for training and one for testing. This created five
train-test pairs per data set, over which the results were averaged. The original data was augmented with noisy features. This was done by taking each feature and adding random scaled noise to a certain percentage of the data points.  The scale parameters were $\{ 0.1, 0.3\}$ and the percentage of data points that were randomly affected was  $\{ 0.25, 0.50, 0.75 \}$. Specifically, for each feature, a percentage of the data points had scaled zero mean Gaussian noise added to that feature in the form of
\begin{eqnarray}
{\bf x}_i={\bf x}_i+{\bf x}_i \cdot {y} \cdot s,
\end{eqnarray} 
where ${\bf x}_i$ is the $i$-th  feature of the original data vector, ${y} \in N(0,1)$ is the Gaussian noise  and $s$ is the scale parameter. 
The empirical minimum risk  was then computed for each feature where $P_{{\bf X}|Y}({
\bf x}|y)$ was modeled as a $10$ bin histogram.

A greedy feature selection algorithm was implemented in which the features were ranked according to their empirical minimum risk and the highest ranked $5\%$ and $10\%$ of the features were selected. The selected features were then used to train and test a linear SVM classifier.
If a certain minimum risk  $C^*_{\phi}$ is a better bound on the Bayes error, we would expect it to choose better features and these better features should translate into a better SVM classifier with smaller error rate on the test data. Five different $C^*_{\phi}$ were considered namely $C^*_{Poly-4}$, $C^*_{Poly-2}$, $C^*_{LS}$, $C^*_{Log}$ and $C^*_{Exp}$ and the error rate corresponding to  each $C^*_{\phi}$ was computed and averaged over the five folds. The average error rates were then ranked such that a rank of $1$ was assigned to the $C^*_{\phi}$ with smallest error and a rank of $5$ assigned to the $C^*_{\phi}$ with largest error.

The  \emph{rank over selected features}  was computed by averaging the ranks found by using both $5\%$ and $10\%$ of the highest ranked  features. This process was repeated a total of   $25$ times for each UCI data set and the \emph{over all average rank} was found by averaging 
over the $25$ experiment runs. 
The  \emph{over all average rank} found for each UCI data set and each $C^*_{\phi}$ is reported in Table-\ref{tab:OverAllFeatRanks}. The last two columns of this table are the total number of times each $C^*_{\phi}$ has had the best rank over the ten different data sets (\#W) and a ranking of the \emph{over all average rank} computed for each data set and then averaged across all data sets (Rank). It can be seen that  $C^*_{Poly-4}$ which was designed to have the tightest bound on the Bayes error has the most number of wins of $4$ and smallest Rank of $2.4$ while $C^*_{Exp}$ which has the loosest bound on the Bayes error has the least number of wins of $0$ and worst Rank of  $3.75$. As expected, the Rank  for each $C^*_{\phi}$ is   in order of how tightly they approximate the Bayes error with in order $C^*_{Poly-4}$, $C^*_{Poly-2}$ and $C^*_{LS}$ at the top and $C^*_{Log}$ and $C^*_{Exp}$ at the bottom. This is in accordance with the discussion of Section~\ref{sec:TighterBounds22}.

\begin{table}[tbp]
\centering
\caption{\protect\footnotesize{The  \emph{over all average rank}  for each UCI data set and each $C^*_{\phi}$. }}
\resizebox{\textwidth}{!}{ 
\begin{tabular}{|c|c|c|c|c|c|c|c|c|c|c|c|c|  }
    \hline
    $C^*_{\phi}$ & \#1&\#2&\#3&\#4&\#5&\#6&\#7&\#8&\#9&\#10 & \#W & Rank \\
    \hline
    $C^*_{Poly-4}$ & 2.9 & 2.75 & 3.62 & 2.77 & {\bf 2.86} & {\bf 2.39} & 3.55 & {\bf 2.87} & 3.25 & {\bf 2.86} & 4 & 2.4 \\
    \hline
    $C^*_{Poly-2}$ & {\bf 2.82} & 2.88 & 3.37 & {\bf 2.62} & 2.87 & 2.74 & 3.27 & 2.9 & 3.46 & 2.98 & 2 & 2.7 \\
    \hline
    $C^*_{LS}$  & 3.02 & {\bf 2.73} & 2.92 & 3.03 & 2.9 & 3.17 & {\bf 2.65} & 2.96 & 2.97 & 3.16 & 2 & 2.8 \\
    \hline
    $C^*_{Log}$ & 3.16   & 3.32 & {\bf 2.5}  & 3.44  & 3.0 & 3.2 & 2.78 & 3.08 & {\bf 2.62} & 2.88 & 2 & 3.35 \\
    \hline
    $C^*_{Exp}$ & 3.1  & 3.32  & 2.59 & 3.14  & 3.37 & 3.5 & 2.75 & 3.19 & 2.7 & 3.12 & 0 & 3.75 \\
    \hline
\end{tabular}
}
\label{tab:OverAllFeatRanks}
\end{table}

\subsection{ Kernel Hypothesis Testing Experiments }
\label{sec:HypoTests}
The first set of experiments comprised of  hypothesis tests on Gaussian samples. Specifically,  two hypothesis tests were considered. In the first test, we used $250$ 
samples for each $25$ dimensional Gaussian distribution ${\mathcal N}(0,1.5I)$ and ${\mathcal N}(0,1.7I)$. Note that the means are equal and the variances are slightly different.  In the second test, we used $250$ samples for each $25$ dimensional Gaussian distribution ${\mathcal N}(0,1.5I)$ and ${\mathcal N}(0.1,1.5I)$. Note that the variances are equal and the means are slightly different.  In both cases the reject thresholds were found from $100$ bootstrap iterations for a fixed type-I error 
of $\alpha=0.05$. We used the Gaussian kernel embedding for all experiments and the kernel parameter was found using the median heuristic of \cite{TwoMMD}. Also,
the Kernel Divergence (KD) used  $C^*_{Poly-4}$ of equation (\ref{eq:CPoly4}) and  one dimensional Gaussian distribution models.
Unlike the classification problem described in the previous section, having a tight  estimate of the Bayes error is not important for hypothesis testing experiments and so the actual concave $C$ function used is not crucial. 
The type-II error test results for $100$ repetitions are reported in Table \ref{tab:GaussMMDHypoTest} for the MMD, BKD, KD methods, where ${\Phi(\bf w})=\frac{({{\bm \mu}_P}-{{\bm \mu}_Q})}{||({{\bm \mu}_P}-{{\bm \mu}_Q})||}$, along with the  combined method described in Section \ref{sec:OneClass} where  a one-class nearest neighbor classifier with infinity norm is learned for [MMD, KD] and [MMD, BKD].
These  results are  typical  and in   general  (a) the KD and BKD methods do better than the MMD when the means are equal and the variances are different, (b) the MMD does better than the KD and BKD when the variances are equal and the means are different and (c) The combined methods of  
[MMD, KD] and [MMD, BKD] do well for both cases. We usually don't know which case we are dealing with in practice and so the combined methods
of [MMD, KD] and [MMD, BKD] are preferred. 

\begin{table}[tbp]
\centering
\caption{\protect\footnotesize{Percentage of type-II error for the hypothesis tests on two types of Gaussian samples given $\alpha=0.05$.   }}
\begin{tabular}{|c|c||c|  }
    \hline
    Method & $\sigma_1=1.5$, $\sigma_2=1.7$   & $\mu_1=0$, $\mu_2=0.1$    \\
		       &  $\mu_1=\mu_2=0$ & $\sigma_1=\sigma_2=1.5$ \\
    \hline
		MMD & 46 & 25 \\
    \hline
		KD & 13 & 42 \\
    \hline
		BKD & {\bf 11} & 40 \\
		\hline
		[MMD, KD] & 13 & 25 \\
		\hline
		[MMD, BKD] & 12 & {\bf 24} \\
		\hline
\end{tabular}
\label{tab:GaussMMDHypoTest}
\end{table}

\subsubsection{Bench-Mark Gene Data Sets}
Next we evaluated the proposed methods on a group of high dimensional bench-mark gene data sets. 
The data sets are detailed in Table \ref{tab:GeneDataNUMS} and are challenging given their small sample size and high dimensionality.  
The hypothesis testing involved splitting the positive samples in two and using the first half to learn the reject thresholds from $1000$ bootstrap iterations for a fixed type-I error of $\alpha=0.05$. We  used the Gaussian kernel embedding for all experiments and the kernel parameter was found using the median heuristic of \cite{TwoMMD}. The Kernel Divergence (KD) used  $C^*_{Poly-4}$ of equation (\ref{eq:CPoly4}) and  one dimensional Gaussian distribution models.
The type-II error test results for $1000$ repetitions are reported in Table \ref{tab:GeneDataResultsHypo} for the MMD, BKD, KD, [MMD, KD] and [MMD, BKD] methods. Also, three projection directions are considered namely,  MEANS where $\Phi({\bf w})=\frac{({{\bm \mu}_P}-{{\bm \mu}_Q})}{||({{\bm \mu}_P}-{{\bm \mu}_Q})||}$,
 FISHER where the $\Phi({\bf w})$ associated with the kernel Fisher linear discriminant is used, and SVM where the $\Phi({\bf w})$  associated with the kernel SVM is used. 

We have  reported the rank of each method among the five methods with the same projection direction under RANK1  and the overall rank among all fifteen methods under RANK2 in the last column. Note that the first row of Table \ref{tab:GeneDataResultsHypo} with MMD distance measure and MEANS projection direction is the only  method previously proposed in the literature \cite{MMD}. We should also note that the KD with FISHER projection direction encountered numerical problems in the form of very small variance estimates, which resulted in  poor performance. 
Nevertheless, we can see that in general the KD and BKD methods  which incorporate more information regarding the projected distributions, outperform the MMD. 
Second, using more discriminant projection directions like the FISHER or SVM outperform simply projecting onto MEANS.
Finally, the [MMD, KD] and [MMD, BKD] methods that combine the information
provided by both the MMD and the KD or BKD have the  lowest ranks. Specifically, the  [MMD, KD] with SVM projection direction has the overall lowest rank among all fifteen methods.


\begin{table}[tbp]
\centering
\caption{\protect\footnotesize{Gene data set details.  }}
\resizebox{\textwidth}{!}{ 
\begin{tabular}{|c|c|c|c|c|  }
    \hline
    Number & Data Set & \#Positives & \#Negatives & \#Dimensions  \\
    \hline
		\#1& Lung Cancer Womenӳ Hospital & 31  & 150 & 12533  \\
    \hline
		\#2& Lukemia & 25 & 47 & 7129  \\
    \hline
		\#3& Lymphoma Harvard Outcome & 26 & 32 & 7129 \\
    \hline
		\#4& Lymphoma Harvard & 19 & 58 & 7129 \\
    \hline
		\#5& Central Nervous System Tumor & 21 & 39 & 7129  \\
    \hline
		\#6& Colon Tumor & 22 & 40 & 2000  \\
    \hline
		\#7& Breast Cancer ER & 25 & 24 & 7129  \\
    \hline
\end{tabular}
}
\label{tab:GeneDataNUMS}
\end{table}

%

\begin{table}[tbp]
\centering
\caption{\protect\footnotesize{  Percentage of type-II error for the gene data sets  given $\alpha=0.05$. RANK1 is the  rank of each method among the five methods with the same projection direction  and RANK2 is the overall rank among all fifteen methods.}}
\resizebox{\textwidth}{!}{ 
\begin{tabular}{|c|c||c|c|c|c|c|c|c||c|c|  }
    \hline
    Projection & Measure & \#7 & \#6 & \#5 & \#4 & \#3 & \#2 & \#1 & Rank1  & Rank2 \\
    \hline
		MEANS &MMD & 24.3 & 27.4 & 95 & 31.2 & 90.8 & 11.7 & 6.3 &  3.42 & 9.14\\
    \hline
		MEANS &KD & 9.8 & 58.5 & 83.8 & 53.1 & 79.2 & 64.7 & 7.7 & 3.71  & 10.14\\
    \hline
		MEANS &BKD & 12 & 56.9 & 83.4 & 52.5 & 79.3 & 58.0 & 3.7 & 3.14  & 9.14\\
    \hline
		MEANS &[MMD, KD] & 12.2 & 48 & 82.9 & 25.2 & 84.1 & 14.7 & 3.0 & 2.57  & 7.42\\
    \hline
		MEANS & [MMD, BKD] & 13.2 & 47.3 & 81.9 & 24.3 & 83.6 & 14.0 & 3.2 & {\bf 2.14}  & 6.42\\
		
    \hline
		\hline
		
		FISHER& MMD & 5.8 & 26.5 & 90.2 & 24.8 & 83.1 & 14.1 & 4.2 & {\bf 1.78}  & 6.07\\
    \hline
		FISHER&KD & 100 & 100 & 100 & 100 & 100 & 100 & 100 & 5 & 15\\
    \hline
		FISHER&BKD & 30.6 & 52.4 & 82.4 & 66.0 & {\bf 64.0} & 73.3 & 22.6 & 3.14  & 10.28\\
    \hline
		FISHER&[MMD, KD] & 9.6 & 26.5 & 95.3 & 31.9 & 93.6 & 18.7 & 5.4 & 3.14  & 9.42\\
    \hline
		FISHER&[MMD, BKD] & 6.2 & {\bf 26.4} & 82.8 & 31.0 & 74.9 & 18.6 & 5.4 & 1.92  & 5.21\\
		
		\hline
		\hline
		
		SVM& MMD & 22.8 & 29.9 & 95.1 & 26.2 & 89.2 & {\bf 10.0} & 2.1 & 3.85  & 8.00\\
    \hline
		SVM&KD & {\bf 4.0} & 48.2 & {\bf 81.3} & 33.4 & 82.4 & 41.1 & 1.0 & 3.28  & 6.42\\
    \hline
		SVM&BKD & 4.3 & 44.2 & 86.3 & 32.0 & 79.4 & 34.4 & 0.5 & 2.85  & 6.57\\
    \hline
		SVM&[MMD, KD] & 6.3 & 28.1 & 88.4 & {\bf 20.5} & 86.4 & 13.7 & {\bf 0.4} & {\bf 2.28}  & {\bf 5.14}\\
    \hline
		SVM&[MMD, BKD] & 6.6 & 28.2 & 89.0 & {\bf 20.5} & 84.9 & 13.8 & {\bf 0.4} & 2.71  & 5.57\\
		\hline
\end{tabular}
}
\label{tab:GeneDataResultsHypo}
\end{table}

\section{Conclusion}
While we have concentrated on the hypothesis testing problem in the experiments section, we envision many different applications  for the Kernel Score and Kernel Divergence.  We showed that the MMD is a special case of the Kernel Score and so the Kernel Score can now be used in all other applications based on the MMD, such as integrating biological data, imitation learning, etc.   We also showed that the Kernel Score is related to the minimum risk of the projected embedded distributions and we showed how to derive tighter bounds on the Bayes error. Many applications that are based on risk minimization, bounds on the Bayes error or divergence measures such as classification, regression, feature selection, estimation, information theory etc, can now use the Kernel Score and Kernel Divergence to their benefit. We presented the Kernel Score as a general formulation for a score function in the Reproducing Kernel Hilbert Space and considered when it has the important property of being  strictly proper. The Kernel Score is thus also directly applicable to probability elicitation, forecasting, finance and meteorology which rely on strictly proper scoring rules.

\bigskip
\begin{center}
{\large\bf SUPPLEMENTARY MATERIAL}
\end{center}

\section{Proof of Theorem \ref{thm:StrictDivProperty}}
\label{app:KscoreStrict}
If $P=Q$ then 
\begin{eqnarray}
KD_{C,k,{\cal F},{\Phi({\bf w})}}(P,Q)=-\int P^p(z)C(\frac{1}{2})dz + \frac{1}{2} = -\frac{1}{2}\int P^p(z)dz + \frac{1}{2}= 0.
\end{eqnarray}
Next, we prove the converse. The proof is identical to Theorem 5 of \cite{MMD} up to the point where we must prove that if $KD_{C,k,{\cal F},{\Phi({\bf w})}}(P,Q)=0$ then
${\bm \mu}_P={\bm \mu}_Q$. To show this we write
\begin{eqnarray}
KD_{C,k,{\cal F},{\Phi({\bf w})}}(P,Q)=-\int  \left( \frac{P^p(z) + Q^p(z)}{2} \right) C\left( \frac{P^p(z)}{P^p(z) + Q^p(z)}  \right)dz +\frac{1}{2} =0
\end{eqnarray}
or
\begin{eqnarray}
\int  \left( \frac{P^p(z) + Q^p(z)}{2} \right) C\left( \frac{P^p(z)}{P^p(z) + Q^p(z)}  \right)dz =\frac{1}{2}.
\end{eqnarray}
Since $C(\eta)$ is concave and has a maximum value of $\frac{1}{2}$ at $\eta=\frac{1}{2}$ then the above equation can only hold if  

\begin{eqnarray}
C\left(\frac{P^p(z)}{P^p(z)+Q^p(z)}\right)=\frac{1}{2},
\end{eqnarray}
which means that 
\begin{eqnarray}
\frac{P^p(z)}{P^p(z)+Q^p(z)}=\frac{1}{2},
\end{eqnarray}
and so
\begin{eqnarray}
P^p(z)=Q^p(z).
\end{eqnarray}
From this we conclude that their associated means must be equal, namely
\begin{eqnarray}
\mu^p_P=\mu^p_Q.
\end{eqnarray} 
The above equation can be written as
\begin{eqnarray}
<{\bm \mu}_P, \Phi({\bf w})>_{\cal H}=<{\bm \mu}_Q, \Phi({\bf w})>_{\cal H}
\end{eqnarray} 
or equivalently as
\begin{eqnarray}
<{\bm \mu}_P-{\bm \mu}_P, \Phi({\bf w})>_{\cal H}=0.
\end{eqnarray} 
Since $\Phi({\bf w})$ is not in the orthogonal compliment of ${\bm \mu}_P-{\bm \mu}_P$ then it must be that
\begin{eqnarray}
{\bm \mu}_P={\bm \mu}_Q.
\end{eqnarray}
The rest of the proof is again identical to Theorem 5 of \cite{MMD} and the theorem is similarly proven.

To prove that the Kernel Score is strictly proper we note that if $P=Q$ then $KD_{C,k,{\cal F},{\Phi({\bf w})}}(Q,Q)=0$ and 
so $S_{C,k,{\cal F},{\Phi({\bf w})}}(Q,Q)=\frac{1}{2}$. This means  that 
we need to show that $S_{C,k,{\cal F},{\Phi({\bf w})}}(Q,Q)=\frac{1}{2} \ge S_{C,k,{\cal F},{\Phi({\bf w})}}(P,Q)$. This readily follows since $C(\eta)$ is strictly concave with maximum at  $C(\frac{1}{2})=\frac{1}{2}$.

\section{Proof of Lemma \ref{thm:validchoices}}
\label{app:ValidVect}
$\Phi({\bf w})=\frac{({\bm \mu}_P-{\bm \mu}_Q)}{||({\bm \mu}_P-{\bm \mu}_Q)||_{\cal H}}$ is not in the orthogonal compliment of $M=\{{\bm \mu}_P - {\bm \mu}_Q\}$
since 
\begin{eqnarray}
<\frac{({\bm \mu}_P-{\bm \mu}_Q)}{||({\bm \mu}_P-{\bm \mu}_Q)||_{\cal H}}, {\bm \mu}_P - {\bm \mu}_Q>_{\cal H} \ne 0.
\end{eqnarray}

The $\Phi({\bf w})$ equal to the kernel Fisher discriminant projection vector is not in the orthogonal compliment of $M=\{{\bm \mu}_P - {\bm \mu}_Q\}$
because if it were then the kernel Fisher discriminant objective, which can be written as $\frac{\mu^p_P - \mu^p_Q}{(\sigma^p_P)^2 + (\sigma^p_Q)^2}$, 
would not be maximized and would instead be equal to zero.

The $\Phi({\bf w})$ equal to the kernel SVM projection vector is not in the orthogonal compliment of $M=\{{\bm \mu}_P - {\bm \mu}_Q\}$ since the kernel SVM 
 is  equivalent to the kernel Fisher discriminant computed on the set of support vectors ~\cite{SVDequalLDA}.

\section{Proof of Lemma \ref{lemma:AlternateMMD}}
\label{app:MMDNewLook}
We know that $\mu^p_P$ is the projection of ${\bm \mu}_P$ onto $\Phi({\bf w})$ so we can write
\begin{eqnarray}
\mu^p_P=<{\bm \mu}_P, \Phi({\bf w})>_{\cal H}=<{\bm \mu}_P, \frac{({\bm \mu}_P-{\bm \mu}_Q)}{||({\bm \mu}_P-{\bm \mu}_Q)||_{\cal H}}>_{\cal H}
=\frac{<{\bm \mu}_P,{\bm \mu}_P>_{\cal H}-<{\bm \mu}_P,{\bm \mu}_Q>_{\cal H}}{||({\bm \mu}_P-{\bm \mu}_Q)||_{\cal H}}
\end{eqnarray}
Similarly,
\begin{eqnarray}
\mu^p_Q=<{\bm \mu}_Q, \Phi({\bf w})>_{\cal H}=<{\bm \mu}_Q, \frac{({\bm \mu}_P-{\bm \mu}_Q)}{||({\bm \mu}_P-{\bm \mu}_Q)||_{\cal H}}>_{\cal H}
=\frac{<{\bm \mu}_Q,{\bm \mu}_P>_{\cal H}-<{\bm \mu}_Q,{\bm \mu}_Q>_{\cal H}}{||({\bm \mu}_P-{\bm \mu}_Q)||_{\cal H}}.
\end{eqnarray}
Hence,
\begin{eqnarray}
&&(\mu^p_P - \mu^p_Q)^2=\left(\frac{<{\bm \mu}_P,{\bm \mu}_P>_{\cal H}-2<{\bm \mu}_P,{\bm \mu}_Q>_{\cal H}+<{\bm \mu}_Q,{\bm \mu}_Q>_{\cal H}}{||({\bm \mu}_P-{\bm \mu}_Q)||_{\cal H}} \right)^2 \\
&&= \left(\frac{<({\bm \mu}_P-{\bm \mu}_Q),({\bm \mu}_P-{\bm \mu}_Q)>_{\cal H}}{||({\bm \mu}_P-{\bm \mu}_Q)||_{\cal H}}\right)^2 
= \left(\frac{||({\bm \mu}_P-{\bm \mu}_Q)||_{\cal H}^2}{||({\bm \mu}_P-{\bm \mu}_Q)||_{\cal H}}\right)^2 = ||({\bm \mu}_P-{\bm \mu}_Q)||_{\cal H}^2.
\end{eqnarray}

\section{Proof of Theorem \ref{Thm:KScoreMMDConnection}}
\label{app:MMDKSCoreConnection}
The result readily follows by setting $MMD_{\cal F}(P,Q)=(\mu^p_P - \mu^p_Q)^2$ and $\sigma^p_P = \sigma^p_Q$ into equation (\ref{eq:battarExp2}).

\section{Proof of Lemma \ref{Thm:Brier}}
\label{app:Brier}
By  adding and subtracting $\int_{\bf x} P_{{\bf X}}({\bf x}) \left[ P_{{\bf Y}|{\bf X}}(1|{\bf x})\phi({ p^*}({\bf x})) + P_{{\bf Y}|{\bf X}}(-1|{\bf x})\phi(-{ p^*}({\bf x}))  \right] d{\bf x}$ and considering equation (\ref{equ:MinRiskInit}),
the  risk $R({\hat p^*})$ can be written as
\begin{eqnarray}
R({\hat p^*}) &=& E_{{\bf X},Y}[\phi(y{\hat p^*}({\bf x}))]=\int_{\bf x} P_{{\bf X}}({\bf x}) \sum_y P_{{\bf Y}|{\bf X}}(y|{\bf x})\phi(y{\hat p^*}({\bf x})) d{\bf x} \\ 
  &=& \int_{\bf x} P_{{\bf X}}({\bf x}) \left[ P_{{\bf Y}|{\bf X}}(1|{\bf x})\phi({\hat p^*}({\bf x})) + P_{{\bf Y}|{\bf X}}(-1|{\bf x})\phi(-{\hat p^*}({\bf x}))  \right] d{\bf x} \nonumber \\
	&=& \int_{\bf x} P_{{\bf X}}({\bf x}) \left[ P_{{\bf Y}|{\bf X}}(1|{\bf x})\left(\phi({\hat p^*}({\bf x}))-\phi({ p^*}({\bf x}))\right) + P_{{\bf Y}|{\bf X}}(-1|{\bf x})\left(\phi(-{\hat p^*}({\bf x}))-\phi(-{ p^*}({\bf x})) \right) \right] d{\bf x} \nonumber \\
	&+& \int_{\bf x} P_{{\bf X}}({\bf x}) \left[ P_{{\bf Y}|{\bf X}}(1|{\bf x})\phi({ p^*}({\bf x})) + P_{{\bf Y}|{\bf X}}(-1|{\bf x})\phi(-{ p^*}({\bf x}))  \right] d{\bf x} \nonumber \\
&=&	R_{Calibration} + R({p^*}).
\end{eqnarray}
 

The first term denoted  $R_{Calibration}$  is obviously zero if we have a perfectly calibrated predictor such that ${\hat p^*}({\bf x}) ={p^*}({\bf x})$
for all ${\bf x}$ and is thus a measure of calibration.
Finally, using  equation $\eta({\bf x})=P_{Y|{\bf X}}(1|{\bf x})={[f_\phi^*]}^{-1}(p^*({\bf x})) $ and Theorem \ref{Thm:HamedNuno}, the minimum risk term $R({p^*})$ can be written as
\begin{eqnarray}
&&R({p^*}) = \int_{\bf x} P_{{\bf X}}({\bf x}) \left[ P_{{\bf Y}|{\bf X}}(1|{\bf x})\phi({ p^*}({\bf x})) + P_{{\bf Y}|{\bf X}}(-1|{\bf x})\phi(-{ p^*}({\bf x}))  \right] d{\bf x} \\
&=& \int_{\bf x} P_{{\bf X}}({\bf x}) [ \eta({\bf x}) C_\phi^*(\eta({\bf x}) ) + \eta({\bf x})(1-\eta({\bf x})) [C_\phi^*]^\prime(\eta({\bf x})) \\
&+& (1-\eta({\bf x})) C_\phi^*((1-\eta({\bf x})) ) + (1-\eta({\bf x}))(\eta({\bf x})) [C_\phi^*]^\prime((1-\eta({\bf x}))) ] d{\bf x} \\
&=& \int_{\bf x} P_{{\bf X}}({\bf x}) [ \eta({\bf x}) C_\phi^*(\eta({\bf x}) ) + \eta({\bf x})(1-\eta({\bf x})) [C_\phi^*]^\prime\left(\eta({\bf x})\right) \\    
&+&  C_\phi^*(\eta({\bf x}) ) - \eta({\bf x}) C_\phi^*(\eta({\bf x}) ) - \eta({\bf x})(1-\eta({\bf x})) [C_\phi^*]^\prime\left(\eta({\bf x})\right)  ] d{\bf x} \\
&=& \int_{\bf x} P_{{\bf X}}({\bf x})  C_\phi^*(\eta({\bf x}) ) d{\bf x} \\
&=& \int_{\bf x} P_{{\bf X}}({\bf x})  C_\phi^*({[f_\phi^*]}^{-1}(p^*({\bf x})) ) d{\bf x} \\
&=& \int_{\bf x} P_{{\bf X}}({\bf x})  C_\phi^*(P_{Y|{\bf X}}(1|{\bf x}) ) d{\bf x} \\
\end{eqnarray}

\section{Proof of Theorem \ref{Thm:KDMinRiskConect}}
\label{app:KscoreMinRisk}
Assuming equal priors $P_Y(1)=P_Y(-1)=\frac{1}{2}$,
\begin{eqnarray}
\label{eq:EqualPriorPX}
P_{{\bf X}}({\bf x}) =\frac{P_{{\bf X}|Y}({\bf x}|1) + P_{{\bf X}|Y}({\bf x}|-1)}{2}
\end{eqnarray}
and 
\begin{eqnarray}
\label{eq:P1givenx}
P_{Y|{\bf X}}(1|{\bf x})=\frac{P_{{\bf X}|Y}({\bf x}|1)}{P_{{\bf X}|Y}({\bf x}|1)+P_{{\bf X}|Y}({\bf x}|-1)}.
\end{eqnarray}
We can now write the minimum risk  as
\begin{eqnarray}
\label{eq:SrefUnder}
R({p^*}) =  
	\int_{X} \left(\frac{P_{{\bf X}|Y}({\bf x}|1)+P_{{\bf X}|Y}({\bf x}|-1)}{2}\right) C_\phi^*\left(\frac{P_{{\bf X}|Y}({\bf x}|1)}{P_{{\bf X}|Y}({\bf x}|1)+P_{{\bf X}|Y}({\bf x}|-1)}\right) d{\bf x}
\end{eqnarray}
Equation (\ref{eq:KDMinRiskequality}) readily follows by setting $P_{{\bf X}|Y}({\bf x}|1)=P^p$ and $P_{{\bf X}|Y}({\bf x}|-1)=Q^p$, in which case
$R({p^*})$ is $R^p({p^*})$.

\section{Proof of Theorem \ref{Thm:TightestBoundCR}}
\label{app:Polyn}
The symmetry requirement of $C_{\phi}^*(\eta)=C_{\phi}^*(1-\eta)$ results in a similar requirement on the second derivative ${C_{\phi}^*}''(\eta)={C_{\phi}^*}''(1-\eta)$ and concavity requires that the second derivative satisfy ${C_{\phi}^*}''(\eta)<0$. 
The symmetry  and concavity constraints can both be satisfied by considering
\begin{eqnarray}
{C_{Poly-n}^*}''(\eta) \propto -(\eta(1-\eta))^n.
\end{eqnarray}
From this we  write
\begin{eqnarray}
{C_{Poly-n}^*}'(\eta) \propto \int-(\eta(1-\eta))^n d(\eta) + K_1 = Q(\eta)+K_1.
\end{eqnarray}
Satisfying the constraint that ${C_{Poly-n}^*}'(\frac{1}{2})=0$, we  find $K_1$ as
\begin{eqnarray}
K_1 = -Q(\frac{1}{2}).
\end{eqnarray}
Finally, $C_{Poly-n}^*(\eta)$ is
\begin{eqnarray}
C_{Poly-n}^*(\eta)=K_2(\int  Q(\eta) d(\eta) +K_1\eta),
\end{eqnarray}
where 
\begin{eqnarray}
K_2=\frac{\frac{1}{2}}{\left. (\int  Q(\eta) d(\eta) +K_1\eta)\right|_{\eta=\frac{1}{2}}}
\end{eqnarray}
is a scaling factor such that $C_{Poly-n}^*(\frac{1}{2})=\frac{1}{2}$.
$C_{Poly-n}^*(\eta)$ meets all the requirements of Theorem \ref{thm:tightestC}  so   
$C_{Poly-n}^*(\eta) \ge  C_{0/1}^*(\eta)$ for all $\eta \in [0~1]$ and 
$R_{C_{Poly-n}^*}  \ge R^*$. 

Next, we need to prove that if we follow the above procedure for $n+1$ and find $C_{Poly-(n+1)}^*(\eta)$ then $R_{C_{Poly-n}^*} \ge R_{C_{Poly-(n+1)}^*}$. 
We accomplish this   by showing that $C_{Poly-n}^*(\eta) \ge C_{Poly-(n+1)}^*(\eta)$. 
Without loss of generality, let 
\begin{eqnarray}
{C_{Poly-n}^*}''(\eta) = -(\eta(1-\eta))^n
\end{eqnarray}
 and 
\begin{eqnarray}
{C_{Poly-(n+1)}^*}''(\eta) = -(\eta(1-\eta))^{n+1}
\end{eqnarray}
then ${C_{Poly-n}^*}''(\eta) \le {C_{Poly-(n+1)}^*}''(\eta)$
since  $\eta \in [0~1]$. Also, since ${C_{Poly-n}^*}''(\eta)<0$ and ${C_{Poly-n}^*}''(\eta)={C_{Poly-n}^*}''(1-\eta)$ then 
${C_{Poly-n}^*}'(\eta)$ is a monotonically decreasing function and ${C_{Poly-n}^*}'(\eta)=-{C_{Poly-n}^*}'(1-\eta)$ and so ${C_{Poly-n}^*}'(\frac{1}{2})=0$.
From the mean value theorem 
\begin{eqnarray}
{C_{Poly-n}^*}''(c_1)={C_{Poly-n}^*}'(\frac{1}{2})-{C_{Poly-n}^*}'(\eta)=-{C_{Poly-n}^*}'(\eta)
\end{eqnarray}
and
\begin{eqnarray}
{C_{Poly-(n+1)}^*}''(c_2)={C_{Poly-(n+1)}^*}'(\frac{1}{2})-{C_{Poly-(n+1)}^*}'(\eta)=-{C_{Poly-(n+1)}^*}'(\eta)
\end{eqnarray}
for any $0 \le \eta \le \frac{1}{2}$ and some $0 \le c_1 \le \frac{1}{2}$ and $0 \le c_2 \le \frac{1}{2}$.
Since ${C_{Poly-n}^*}''(\eta) \le {C_{Poly-(n+1)}^*}''(\eta)$ for all $\eta \in [0~1]$ then ${C_{Poly-n}^*}''(c_1) \le {C_{Poly-(n+1)}^*}''(c_2)$
and so 
\begin{eqnarray}
{C_{Poly-n}^*}'(\eta) \ge {C_{Poly-(n+1)}^*}'(\eta)
\end{eqnarray}
for all $0 \le \eta \le \frac{1}{2}$. 
A similar argument leads to
\begin{eqnarray}
{C_{Poly-n}^*}'(\eta) \le {C_{Poly-(n+1)}^*}'(\eta)
\end{eqnarray}
for all $\frac{1}{2} \le \eta \le 1$.

Since ${C_{Poly-n}^*}'(\frac{1}{2})=0$ and ${C_{Poly-n}^*}''(\eta) \le 0$ then $C_{Poly-n}^*(\eta)$ has a maximum at $\eta=\frac{1}{2}$.
Also, since $C_{Poly-n}^*(\eta)$ is a polynomial of $\eta$ with no constant term, then $C_{Poly-n}^*(0)=0$ and because of symmetry $C_{Poly-n}^*(1)=0$. 
From the mean value theorem 
\begin{eqnarray}
{C_{Poly-n}^*}'(c_1)=C_{Poly-n}^*(\eta)-C_{Poly-n}^*(0)=C_{Poly-n}^*(\eta)
\end{eqnarray}
and
\begin{eqnarray}
{C_{Poly-(n+1)}^*}'(c_2)=C_{Poly-(n+1)}^*(\eta)-C_{Poly-(n+1)}^*(0)=C_{Poly-(n+1)}^*(\eta)
\end{eqnarray}
for any $0 \le \eta \le \frac{1}{2}$ and some $0 \le c_1 \le \frac{1}{2}$ and $0 \le c_2 \le \frac{1}{2}$.
Since ${C_{Poly-n}^*}'(\eta) \ge {C_{Poly-(n+1)}^*}'(\eta)$ for all $0 \le \eta \le \frac{1}{2}$ then ${C_{Poly-n}^*}'(c_1) \ge {C_{Poly-(n+1)}^*}'(c_2)$
and so 
\begin{eqnarray}
C_{Poly-n}^*(\eta) \ge C_{Poly-(n+1)}^*(\eta)
\end{eqnarray}
for all $0 \le \eta \le \frac{1}{2}$. 
A similar argument leads to
\begin{eqnarray}
C_{Poly-n}^*(\eta) \ge C_{Poly-(n+1)}^*(\eta)
\end{eqnarray}
for all $\frac{1}{2} \le \eta \le 1$.
Finally, since $C_{Poly-n}^*(\eta)$ and $C_{Poly-(n+1)}^*(\eta)$ are concave functions with  maximum at $\eta=\frac{1}{2}$, scaling these functions by $K_2$ and $K_2'$ respectively, so that their maximum is equal to $\frac{1}{2}$ will not change the final result of
\begin{eqnarray}
C_{Poly-n}^*(\eta) \ge C_{Poly-(n+1)}^*(\eta)
\end{eqnarray}
for all  $0 \le \eta \le 1$.

Finally, to show that $R_{C_{Poly-n}^*}$ converges to $R^*$  we need to show that $C_{Poly-n}^*(\eta)$ converges  
to $C_{0/1}^*(\eta)=\min\{\eta, 1-\eta\}$  as $n \rightarrow \infty$. 
We can expand $\int Q(\eta) d(\eta)$  
and write $C_{Poly-n}^*(\eta)$ as
\begin{eqnarray}
C_{Poly-n}^*(\eta)=K_2(a_1\eta^{(2n+2)} + a_2\eta^{(2n+2)-1} + a_3\eta^{(2n+2)-2} ... + a_{n+1}\eta^{(2n+2)-n} + K_1\eta).
\end{eqnarray}
Assuming that $0 \le \eta \le \frac{1}{2}$ then 
\begin{eqnarray}
\lim_{n\rightarrow \infty} C_{Poly-n}^*(\eta) = K^{\top}_2( 0 + K_1^{\top}\eta) =  K^{\top}_2 K_1^{\top} \eta,
\end{eqnarray}
where $K_1^{\top} =\lim_{n\rightarrow \infty} K_1$ and $K_2^{\top}=\lim_{n\rightarrow \infty} K_2$. 
Since 
\begin{eqnarray}
K_1K_2=\frac{-\frac{1}{2}Q(\frac{1}{2})}{(\int Q(\eta)d(\eta) - Q(\frac{1}{2})\eta)|_{\eta=\frac{1}{2}}}
\end{eqnarray}
then 
\begin{eqnarray}
K_1^{\top}K_2^{\top}= \frac{-\frac{1}{2}Q(\frac{1}{2})}{( 0 - \frac{1}{2}Q(\frac{1}{2}) )} = 1.
\end{eqnarray}
So, we can write 
\begin{eqnarray}
\lim_{n\rightarrow \infty} C_{Poly-n}^*(\eta) =  K^{\top}_2 K_1^{\top} \eta =  \eta . 
\end{eqnarray}
A similar argument for $\frac{1}{2} \le \eta \le 1$ completes the convergence proof.

\bibliographystyle{Chicago}
\bibliography{IEEEexample}
\end{document}